\setlist[enumerate]{leftmargin=.5in}
\setlist[itemize]{leftmargin=.5in}
\newtheorem{theorem}{Theorem}
 \newtheorem{definition}{Definition}
\newtheorem{proposition}{Proposition}
\newcommand{\RN}[1]{%
  \textup{\uppercase\expandafter{\romannumeral#1}}%
}
\DeclareMathOperator*{\argmin}{arg\,min}
\newcommand{\eqdef}{\stackrel{\vartriangle}{=}}
\newtheorem{remark}{Remark}
\newtheorem{corollary}{Corollary}
\title{Multi-Kernel Regression with Sparsity Constraint }
\author{Shayan Aziznejad\thanks{ Biomedical Imaging Group, EPFL, Lausanne, Switzerland (shayan.aziznejad@epfl.ch,   michael.unser@epfl.ch)  This work was funded by the Swiss National Science Foundation under Grant 200020\textunderscore 184646 / 1. }
\and Michael Unser\footnotemark[1]  }
\begin{document}

\maketitle

\begin{abstract}
In this paper, we provide a Banach-space formulation of supervised learning with  generalized total-variation (gTV) regularization. We identify the class of kernel functions that are admissible in this framework. Then, we propose a  variation of supervised learning in a continuous-domain hybrid search space with gTV regularization. We show that the solution  admits a multi-kernel expansion with adaptive positions. In this representation, the number of active kernels is upper-bounded by the number of data points while the gTV regularization imposes an $\ell_1$ penalty on the kernel coefficients. Finally, we illustrate numerically  the outcome of our theory.
\end{abstract}

\textbf{Key words:}  Representer theorem, regularization theory, multiple-kernel learning, generalized LASSO, generalized total variation.

\section{Introduction}

 The determination of  an unknown function  from a series of   samples is a classical problem in machine learning. It falls under the category of ``supervised learning,'' for which there exists a  rich literature  (see \cite{bishop2006pattern,hastie2009overview,wahba1990spline} for classical textbooks, as well as \cite{cucker2007learning,devroye2013probabilistic,gyorfi2006distribution,steinwart2008support} for more recent ones).   The goal of supervised learning is to recover a target function $f:\mathbb{R}^d\rightarrow \mathbb{R}$ from its  $M$  noisy samples  ${y}_m = f(\boldsymbol{x}_m)+\epsilon_m$, $m=1,2,\ldots,M$.   The disturbance terms $\epsilon_m$ are typically assumed to be i.i.d.\ samples of a   zero-mean probability law ({\it e.g.}, additive Gaussian noise) while the input vectors $\boldsymbol{x}_m$ are either assumed to be in the random or fixed design \cite[Section 1.9]{gyorfi2006distribution}.  
 
 A general way to formulate supervised learning is through the   minimization problem
\begin{equation}\label{Pb:GeneralFormLearning}
\min_{f }  \bigg( \underbrace{ \sum_{m=1}^M \mathrm{E}(f(\boldsymbol{x}_m),y_m)  }_{\RN{1}} +  \lambda \underbrace{\vphantom{\sum_{m=1}^M \mathrm{E}(f(\boldsymbol{x}_m),y_m)  }\mathcal{R}(f)}_{\RN{2}}\bigg),
\end{equation}
where the  cost function is made of two terms. The first one (data fidelity)  measures how well $f$ fits the given training dataset while the second one (regularization)    imposes the prior knowledge about the function model. The parameter $\lambda \in \mathbb{R}^{+}$  balances the terms. 

\subsection{RKHS in Machine Learning}

The simplest form of \eqref{Pb:GeneralFormLearning} is the least-squares problem with Tikhonov regularization  
\begin{equation}\label{Pb:L2Tikhonov}
\min_{f\in \mathcal{H}_\mathrm{L}(\mathbb{R}^d) } \left(\sum_{m=1}^M |f(\boldsymbol{x}_m)- {y}_m|^2 + \lambda \| \mathrm{L}\{f\}\|_{L_2}^2\right),
\end{equation}
where $\mathrm{L}$ is the regularization operator  and   $\mathcal{H}_{\mathrm{L}}(\mathbb{R}^d)$, known as the native space of $\mathrm{L}$,  is the  space of    functions $f:\mathbb{R}^d\rightarrow\mathbb{R}$ such that $\mathrm{L}\{f\}\in L_2(\mathbb{R}^d)$   (see \eqref{Eq:LpNorm} for the definition of the $L_p$ spaces). It is a classical quadratic minimization problem that has a closed-form solution \cite{tikhonov1963solution}. An important assumption in this formulation is the continuity of the sampling functionals $\delta_{\boldsymbol{x}_m}=\delta(\cdot-\boldsymbol{x}_m):f \mapsto f( \boldsymbol{x}_m)$ for $m=1,2,\ldots,M$. This is equivalent to  $\mathcal{H}_\mathrm{L}(\mathbb{R}^d)$ being a reproducing-kernel Hilbert space (RKHS)  \cite{aronszajn1950theory,de1966splines,wahba1990spline}, which is a key concept in supervised learning \cite{berlinet2011reproducing,steinwart2008support}.   

The Hilbert space $\mathcal{H}(\mathbb{R}^d)$  consisting of functions from $\mathbb{R}^d$ to $\mathbb{R}$ is called an RKHS  if  there exists a bivariate symmetric and positive-definite  function $\mathrm{k}:\mathbb{R}^d\times\mathbb{R}^d \rightarrow\mathbb{R}$  such that, for all $\boldsymbol{x}\in \mathbb{R}^d$,  $\mathrm{k}(\boldsymbol{x},\cdot) \in \mathcal{H}(\mathbb{R}^d)$ and  $f(\boldsymbol{x})= \langle \mathrm{k}(\boldsymbol{x},\cdot) , f(\cdot) \rangle_\mathcal{H}$ \cite{aronszajn1950theory}. The function $\mathrm{k}(\cdot,\cdot)$ is   unique and is called the reproducing kernel of $\mathcal{H}(\mathbb{R}^d)$. 

The supervised learning over the RKHS $\mathcal{H}(\mathbb{R}^d)$ can be formulated through the minimization 
 \begin{equation}\label{Pb:L2RKHS}
\min_{f\in \mathcal{H}(\mathbb{R}^d)} \left(\sum_{m=1}^M \mathrm{E}(f(\boldsymbol{x}_m), {y}_m) + \lambda \| f\|_\mathcal{H}^2\right).
\end{equation} 
The kernel representer theorem states that the solution of \eqref{Pb:L2RKHS} admits the form
\begin{equation}\label{Eq:RKHSSolution}
f(\cdot) = \sum_{m=1}^M a_m \mathrm{k}(\cdot,\boldsymbol{x}_m)
\end{equation}
for some appropriate weights $a_m\in\mathbb{R}$, where $m=1,2,\ldots,M$   \cite{kimeldorf1971some,scholkopf2001generalized}. The  expansion \eqref{Eq:RKHSSolution} is the key element of  kernel-based schemes in machine learning  \cite{scholkopf2001learning,shawe2004kernel,vapnik1998statistical}   and, in particular, support-vector machines (SVM) \cite{evgeniou2000regularization,steinwart2008support}. Moreover,  optimal rates have been derived for learning using the expansion \eqref{Eq:RKHSSolution} in several setups \cite{caponnetto2007optimal,mendelson2010regularization,steinwart2009optimal}, particularly for  Gaussian kernels \cite{eberts2013optimal}.  Computing the RKHS norm of a function $f$ of the form  \eqref{Eq:RKHSSolution} results  in $\|f\|_\mathcal{H}^2 = \boldsymbol{a}^T \mathbf{G} \boldsymbol{a}$, where $\mathbf{G}\in \mathbb{R}^{M\times M}$  is a symmetric and positive-definite matrix  with $[\mathbf{G}]_{m,n} = \mathrm{k}(\boldsymbol{x}_m,\boldsymbol{x}_n)$. It is called the Gram matrix of the kernel $\mathrm{k}(\cdot,\cdot)$. The practical outcome of this observation is that  the infinite-dimensional problem \eqref{Pb:L2RKHS} over the space of functions $\mathcal{H}(\mathbb{R}^d)$ becomes equivalent to the finite-dimensional problem \cite{scholkopf2001generalized}
\begin{equation}\label{Pb:DiscreteL2}
\min_{\boldsymbol{a}\in\mathbb{R}^M} \left( \sum_{m=1}^M \mathrm{E}([\mathbf{G} \boldsymbol{a}]_m, {y}_m) +  \lambda \boldsymbol{a}^T \mathbf{G} \boldsymbol{a}\right),
\end{equation}
which is    of size $M$ and can be computed     numerically. 

\subsection{Toward  Sparse Kernel Expansions}
 In the solution form \eqref{Eq:RKHSSolution}, the kernels are  shifted to the location of the data samples. This is elegant but  can become cumbersome when the number of samples $M$ grows large.   Several schemes have been  developed  to reduce the number of active kernels. One proposed approach is to   use  a sparsity-enforcing loss such as the $\epsilon$-insensitive norm of SVM regression \cite{steinwart2003sparseness,steinwart2004sparseness,steinwart2009sparsity}. Another approach is to replace the  quadratic regularization $\boldsymbol{a}^T \mathbf{G} \boldsymbol{a}$ in the reduced finite-dimensional problem \eqref{Pb:DiscreteL2} by a sparsity-promoting penalty such as $\|\boldsymbol{a}\|_{\ell_1}= \sum_{m=1}^M |a_m|$. This results in   \eqref{Pb:DiscreteL2} becoming
\begin{equation}\label{Pb:GeneralizedLASSO}
\min_{\boldsymbol{a}\in\mathbb{R}^M} \left(\sum_{m=1}^M \mathrm{E}([\mathbf{G} \boldsymbol{a}]_m, {y}_m) +  \lambda \|\boldsymbol{a}\|_{\ell_1}\right),
\end{equation} 
  which is called   the generalized LASSO \cite{roth2004generalized}.   The properties of this estimator have been studied both from a statistical  \cite{shi2011concentration} and approximation-theoretical point of view \cite{wang2013approximation}. 

 In this paper, we consider a  Banach-space formulation of  supervised learning.   We  choose  the generalized total-variation (gTV) norm as the regularization term in order to promote  sparsity  in the continuous domain. The effect of gTV regularization has been extensively studied in the context of linear inverse problems \cite{fisher1975spline,mammen1997locally,unser2017splines}. For an invertible operator $\mathrm{L}$  (see Definition \ref{Def:KernelAdmis}), the gTV norm is defined as 
 \begin{equation}\label{Eq:gTVdef}
 \mathrm{gTV}(f) = \|\mathrm{L}\{f\}\|_{\mathcal{M}},
 \end{equation}
 where  $\mathcal{M}(\mathbb{R}^d)$ is the space of bounded Radon measures (see \eqref{Eq:Mdef} for a precise definition) and   $\|\cdot\|_{\mathcal{M}}$ is the total-variation norm in the sense of measures \cite{rudin2006real}.    
 
 One can formulate supervised learning with gTV regularization through the minimization 
\begin{equation}\label{Pb:SingleKernelgTV}
\min_{f\in\mathcal{M}_{\mathrm{L}}(\mathbb{R}^d)} \left(\sum_{m=1}^M \mathrm{E}(f(\boldsymbol{x}_m), {y}_m) + \lambda \|\mathrm{L}\{f \} \|_{\mathcal{M}}\right),
\end{equation}
where $\mathcal{M}_{\mathrm{L}}(\mathbb{R}^d)$ is the native Banach space of the operator   $\mathrm{L}:\mathcal{M}_{\mathrm{L}}(\mathbb{R}^d)\rightarrow\mathcal{M}(\mathbb{R}^d)$ equipped with the gTV norm (see Definition \ref{Def:NativeSpace}). The fact that  $\mathcal{M}_{\mathrm{L}}(\mathbb{R}^d)$ is   a Banach space ({\it i.e.}, a complete normed space) follows from the invertibility of ${\rm L}$  (see Theorem \ref{Thm:NativeSpace}). 
A consequence of the general representer theorem of  \cite{unser2017splines} is that there is always a solution of \eqref{Pb:SingleKernelgTV} that admits  a linear kernel expansion  of the form 
\begin{equation}\label{Eq:SingleTVSolForm}
f(\cdot)=\sum_{l=1}^{M_0} a_l \mathrm{k}(\cdot,\boldsymbol{z}_l), 
\end{equation}
for some  unknown integer $M_0\leq M$, non-zero  kernel weights $a_l\in \mathbb{R}$, and some  distinct  adaptive kernel positions $\boldsymbol{z}_l\in\mathbb{R}^d$ \cite{gupta2018continuous}. There, the function $\mathrm{k}(\cdot,\cdot):\mathbb{R}^d\times\mathbb{R}^d \rightarrow \mathbb{R}$   is the shift-invariant kernel associated to  the Green's function of the operator $\mathrm{L}$. In other words, we have that $\mathrm{k}(\boldsymbol{x},\boldsymbol{y}) = \rho_{\mathrm{L}}(\boldsymbol{x}-\boldsymbol{y})$, where $\rho_{\mathrm{L}}=\mathrm{L}^{-1}\{\delta\}$. 

 There   exist  works on supervised learning over Banach spaces, especially via the concept of reproducing-kernel Banach spaces (RKBS)  \cite{fasshauer2015solving,zhang2009reproducing,zhang2012regularized}. However, there are several differences between RKBS and our proposed scheme of learning with gTV regularization. Firstly, as highlighted in \cite{zhang2012regularized}, the RKBS  representer theorem yields a nonlinear kernel expansion  for the optimal solution. Secondly, its kernel positions necessarily coincide with the data points. Last but not least, the Banach spaces in the RKBS theory are restricted to reflexive one (see Section \ref{Subsec:FuncSpc} for the definition of reflexive Banach spaces), which excludes the case of learning with gTV regularization that is known to enforce sparsity in the continuous domain.  

Let us also mention that a  formulation with strong link to \eqref{Pb:SingleKernelgTV} has been presented in \cite{bach2017breaking} for learning a function $f:\mathbb{R}^d\rightarrow\mathbb{R}$ from  a continuously indexed family of atoms $\{\mathrm{k}_{\boldsymbol{z}}\}_{\boldsymbol{z}\in\mathcal{V}}$, where $\mathcal{V}$ is a compact topological space.  Putting it in a similar form as \eqref{Pb:SingleKernelgTV}, the proposed formulation in \cite{bach2017breaking} for supervised learning is   equivalent to the  minimization
\begin{equation}\label{Pb:BachSynth}
\min_{ \mu  \in \mathcal{M}(\mathcal{V})}\left( \sum_{m=1}^M \mathrm{E}\left( \int_{\mathcal{V}} \mathrm{k}_{\boldsymbol{z}}(\boldsymbol{x}_m) \mathrm{d}\mu(\boldsymbol{z}), y_m\right) + \lambda \|\mu\|_{\mathcal{M}}\right), 
\end{equation}
where $\mathcal{M}(\mathcal{V})$ is the space of Radon measures  over $\mathcal{V}$. The relevant property there is that   the minimization of \eqref{Pb:BachSynth} introduces an atomic measure $\mu=\sum_{l=1}^{M_0} a_l \delta(\cdot-\boldsymbol{z}_l)$. It hence suggests the parametric form  \eqref{Eq:SingleTVSolForm}  with $\mathrm{k}(\cdot,\boldsymbol{z}_{l})=\mathrm{k}_{\boldsymbol{z}_l}(\cdot)$ for the learned function.

The minimization problem \eqref{Pb:BachSynth} is  a synthetis-based  formulation for supervised learning where the basis functions are known {\it a priori}, in contrary to \eqref{Pb:SingleKernelgTV} which is an analysis-based formalism that relies on regularization theory in Banach spaces. Interestingly, the two formulations are equivalent when the family of atoms in \eqref{Pb:BachSynth} coincides with the class of shifted Green's function of the  regularization operator $\mathrm{L}$; that is, $\mathrm{k}_{\boldsymbol{z}}(\cdot) = \rho_{\mathrm{L}}(\cdot-\boldsymbol{z})$.

To conclude this section, we discuss the connection between \eqref{Pb:SingleKernelgTV} and generalized LASSO. One readily   verifies  that the gTV norm enforces an $\ell_1$ penalty on the kernel coefficients $a_l$. More precisely, the expansion \eqref{Eq:SingleTVSolForm} translates the original  problem \eqref{Pb:SingleKernelgTV} into the discrete minimization 
 \begin{equation}\label{Pb:gTVdiscreteLASSO}
\min_{\boldsymbol{a}\in\mathbb{R}^M , \mathrm{Z}\in \mathbb{R}^{d\times M_0}} \left(\sum_{m=1}^M \mathrm{E}( [\mathbf{G}_{\mathrm{Z}} \boldsymbol{a}]_m ,y_m) + \lambda \|\boldsymbol{a}\|_{\ell_1}\right),
\end{equation}
where  $\mathrm{Z}= \begin{pmatrix}\boldsymbol{z}_1, \boldsymbol{z}_2 , \ldots, \boldsymbol{z}_{M_0}
\end{pmatrix}$ is the kernel-position  matrix and  $\mathbf{G}_{\mathrm{Z}} \in \mathbb{R}^{M \times M_0} $ is a matrix with $[\mathbf{G}_{\mathrm{Z}}]_{m,l}= \mathrm{k}(\boldsymbol{x}_m,\boldsymbol{z}_l)$. The reduced problem \eqref{Pb:gTVdiscreteLASSO} can be seen as an extended version  of the  generalized LASSO in \eqref{Pb:GeneralizedLASSO}. The fundamental difference is that the minimization is through the positions as well.

 \subsection{Multi-Kernel Schemes}
The solution forms  \eqref{Eq:RKHSSolution} and \eqref{Eq:SingleTVSolForm}  heavily depend on the kernel function $\mathrm{k}(\cdot,\cdot)$. Hence, choosing the proper kernel is a challenging task  that  requires careful consideration.  One can use a cross-validation scheme in order to compare the performance of several kernel estimators and select the best one for the desired application \cite{gonen2011multiple}. Another approach is to  learn a new  kernel function $\mathrm{k}_{\boldsymbol{\mu}} = \sum_{n=1}^N \mu_n \mathrm{k}_n$ from a family of given kernels $\mathrm{k}_1 ,\mathrm{k}_2,\ldots, \mathrm{k}_{N}$  \cite{bach2004multiple,lanckriet2004learning,rakotomamonjy2008simplemkl,micchelli2005learning}.  This transforms the original problem \eqref{Pb:DiscreteL2} into  the joint optimization  
\begin{equation}\label{Pb:MKL}
\min_{{\boldsymbol{\mu}} \in\mathbb{R}^N, \boldsymbol{a}\in\mathbb{R}^M}\left(  \sum_{m=1}^M \mathrm{E}([\mathbf{G}_{\boldsymbol{\mu}}  \boldsymbol{a}]_m, {y}_m) +  \lambda \boldsymbol{a}^T \mathbf{G}_{\boldsymbol{\mu}}  \boldsymbol{a} + \mathrm{R}(\boldsymbol{\mu})\right),
\end{equation}
where $\mathbf{G}_{\boldsymbol{\mu}}$ is the Gram matrix of the learned kernel $\mathrm{k}_{\boldsymbol{\mu}}$ and $\mathrm{R}(\cdot)$ regularizes the coefficient vector $\boldsymbol{\mu}$, for example like in $\mathrm{R}(\boldsymbol{\mu}) = \|\boldsymbol{\mu}\|_{\ell_p}=\left(\sum_{n=1}^N |\mu_n|^p\right)^{\frac{1}{p}}$ for $ 1\leq p \leq 2$   \cite{bach2008consistency,bazerque2013nonparametric,gao2010sparse,kloft2009efficient,kloft2010unifying}. The learned function will then take  the  generic form   
\begin{equation}
f(\cdot)= \sum_{n=1}^N \sum_{m=1}^M \mu_n a_m \mathrm{k}(\cdot,\boldsymbol{x}_m).
\end{equation} 

\subsection{Our Contribution}
In this paper, we provide a Banach-space framework for supervised learning with gTV regularization.   We study the topological structures of the search space of this problem and we characterize the  class of admissible regularization operators together with their associated kernel functions. 

We also propose a multi-kernel extension of supervised learning with gTV regularization. To that end, we consider the minimization
\begin{equation}\label{Pb:MKRIntro}
\min_{\stackrel{ f_n\in\mathcal{M}_{\mathrm{L}_n}(\mathbb{R}^d),}{ f=\sum_{n=1}^N f_n} } \left( \sum_{m=1}^M \mathrm{E}({f}(\boldsymbol{x}_m),{y}_m)+ \lambda \sum_{n=1}^N \|\mathrm{L}_n\{{f}_n\}\|_\mathcal{M}\right). 
\end{equation}
In this formulation, the target function $f$ is   decomposed into $N$ additive components, where  the smoothness of each component  has been expressed by its corresponding regularization operator. 
  Our main result, which follows from Theorem \ref{Thm:Main}, is the existence of  a solution  of \eqref{Pb:MKRIntro} that yields a  multi-kernel expansion of the target function and that takes the form
   \begin{equation}\label{Eq:MKRSolIntro}
f(\cdot) = \sum_{n=1}^{N}\sum_{l=1}^{M_n}  a_{n,l} \mathrm{k}_n(\cdot,\boldsymbol{z}_{n,l}), \quad \|\boldsymbol{a}\|_{\ell_0} \leq M,
 \end{equation} 
where  $\|\boldsymbol{a}\|_{\ell_0}$ is called the $\ell_0$ norm  of $\boldsymbol{a}$ and is equal to the number of nonzero elements of $\boldsymbol{a}$,  and $\mathrm{k}_n$ is the   shift-invariant kernel associated to the operator $\mathrm{L}_n$. Moreover, the total number of nonzero coefficients is upper-bounded by the number  $M$ of data points and, hence, is not growing with the number $N$ of components.    We also illustrate numerically  the effect of using multiple kernels.

\subsection{Roadmap} 
The paper is organized as follows:   We  present some mathematical preliminaries in Section 2. In Section 3, we study the Banach  space  structure of the native spaces and we characterize the class of admissible kernels.  We propose and prove our main result  in Section 4. Finally, we   provide further discussions and illustrations in Section 5.

\section{Preliminaries}

In this section, we recall   relevant mathematical concepts  such as the  function spaces that we use   throughout the paper  along with properties of  linear operators that are defined over those spaces. 

\subsection{Function Spaces}\label{Subsec:FuncSpc}
All the derivatives of  a rapidly decaying function decay faster than the inverse of any polynomial at infinity.  Then, a smooth and slowly growing function    is an element of $\mathcal{C}^{\infty}(\mathbb{R}^d)$ such that all of its derivatives have asymptotic growth controlled by a polynomial. Finally, a heavy-tailed function $f:\mathbb{R}^d\rightarrow\mathbb{R}$ satisfies $f(\boldsymbol{x}) \geq C(1+ \|\boldsymbol{x}\|)^\alpha$ for some finite constants $C,\alpha >0$.

 For $p\in [1,\infty)$, we denote by $L_p(\mathbb{R}^d)$, the Banach space of measurable  functions $f:\mathbb{R}^d\rightarrow \mathbb{R}$ with finite $L_p$ norm, {\it i.e.} 
\begin{equation}\label{Eq:LpNorm}
L_p(\mathbb{R}^d)= \left\{f:\mathbb{R}^d\rightarrow\mathbb{R} \text{ measurable}: \|f\|_{L_p} \eqdef \left(\int_{\mathbb{R}^d} |f(\boldsymbol{x})|^p {\rm d}\boldsymbol{x} \right)^{\frac{1}{p}}<+\infty\right\}.
\end{equation} 
The Schwartz  space of smooth and rapidly decaying functions   $\varphi:\mathbb{R}^d \rightarrow \mathbb{R}$ is denoted by $\mathcal{S}(\mathbb{R}^d)$. Its topological dual is $\mathcal{S}'(\mathbb{R}^d)$,  the space of tempered distributions  \cite{im1964generalized}.  We remark  that any smooth and slowly growing function  $f:\mathbb{R}^d\rightarrow\mathbb{R}$ specifies the continuous linear functional $\varphi\mapsto \int_{\mathbb{R}^d} f(\boldsymbol{x})\varphi(\boldsymbol{x})\mathrm{d}\boldsymbol{x}$ over $\mathcal{S}(\mathbb{R}^d)$ and, hence, is an element of $\mathcal{S}'(\mathbb{R}^d)$. 

The space of   continuous functions  over $\mathbb{R}^d$ that vanish at infinity is     $\mathcal{C}_0(\mathbb{R}^d)$. It is a Banach space  equipped with the supremum norm $\|\cdot\|_\infty$. The space of Schwartz functions   $\mathcal{S}(\mathbb{R}^d)$ is  densely embedded  in $\mathcal{C}_0(\mathbb{R}^d)$. Hence,   the topological dual of $\mathcal{C}_0(\mathbb{R}^d)$ can be defined as 
\begin{align}\label{Eq:Mdef}
\mathcal{M}(\mathbb{R}^d) = \{ w\in \mathcal{S}'(\mathbb{R}^d): \quad \|w\|_{\mathcal{M}} \eqdef \sup_{\substack{\varphi \in \mathcal{S}(\mathbb{R}^d)\\ \|\varphi\|_\infty=1}} |\langle w,\varphi \rangle|<+\infty\}.
\end{align}
In fact, $\mathcal{M}(\mathbb{R}^d)$ is the Banach space of bounded Radon measures over $\mathbb{R}^d$   equipped with the total-variation norm $\|\cdot\|_\mathcal{M}$ \cite{rudin2006real}. It includes  the shifted Dirac impulses $\delta(\cdot-\boldsymbol{x}_0)$, with $\|\delta(\cdot-\boldsymbol{x}_0)\|_\mathcal{M}=1$. Moreover,  ${L}_1(\mathbb{R}^d) \subseteq \mathcal{M}(\mathbb{R}^d)$    with the relation   $\|f\|_{L_1}=\|f\|_{\mathcal{M}}$ for all $f\in L_1(\mathbb{R}^d)$.  This allows one to  interpret $(\mathcal{M}(\mathbb{R}^d),\|\cdot \|_\mathcal{M})$ as a generalization  of  $({L}_1(\mathbb{R}^d),\|\cdot\|_{L_1})$.  

For a Banach space  $\mathcal{X}$, we consider two   topologies for its continuous dual space $\mathcal{X}'$.  The first one is the strong  topology. It is induced from the dual norm in the sense that a sequence $\{w_n \}_{n=0}^{\infty} \in \mathcal{X}'$ is said to converge in the strong topology to $w^*\in \mathcal{X}'$  if $\lim_{n\rightarrow\infty} \| w_n - w^*\|_{\mathcal{X}'} = 0$. The second one is   the weak*-topology that comes from the predual space $\mathcal{X}$ in the sense that    a sequence $\{w_n \}_{n=0}^{\infty}$ is said to converge   in the weak*-topology to $w^*$  if, for any element $\varphi \in \mathcal{X}$, $\{\langle w_n , \varphi \rangle \}_{n=0}^{\infty}$ converges to $\langle w^* , \varphi \rangle$.

Finally, let us mention that any Banach space $\mathcal{X}$ is isometrically isomorphic to a closed subspace of its  second dual $\mathcal{X}'' = \left(\mathcal{X}'\right)'$ (see, for example, \cite[pp. 95]{rudin1991functional}). For the sake of simplicity, we make the possible embedding mappings implicit in our framework. This   leads to writing the latter proposition simply, via the inclusion $\mathcal{X}\subseteq \mathcal{X}''$. In this regard, a Banach space is reflexive if  we have that $\mathcal{X}= \mathcal{X}''$. Typical examples of reflexive Banach spaces are $L_p(\mathbb{R}^d)$ spaces for $p\in(1,\infty)$. By contrast, the space $\mathcal{C}_0(\mathbb{R}^d)$ and, consequently, its dual $\mathcal{M}(\mathbb{R}^d)$, are not reflexive.
\subsection{Linear Operators}
\label{Sec:LinOp}
The linear operator $\mathrm{L}:\mathcal{S}(\mathbb{R}^d) \rightarrow \mathcal{S}'(\mathbb{R}^d)$ is called shift-invariant  if,  for any   function ${\varphi}\in \mathcal{S}(\mathbb{R}^d)$ and  any shift value   $\boldsymbol{x}_0\in \mathbb{R}^d$, we have that 
\begin{equation}\label{Eq:LSIdefSch}
\mathrm{L}\{ \varphi(\cdot- \boldsymbol{x}_0 )\}=\mathrm{L}\{\varphi\}(\cdot- \boldsymbol{x}_0).
\end{equation}
 We recall a variant of the celebrated Schwartz kernel theorem for linear and shift-invariant (LSI) operators (see \cite{simon1971distributions} for a ``simple'' proof of the general case).

\begin{theorem}[Schwartz kernel theorem]
 For any LSI operator $\mathrm{L}:\mathcal{S}(\mathbb{R}^d)\rightarrow\mathcal{S}'(\mathbb{R}^d)$, there exists a unique  distribution  $h \in \mathcal{S}'(\mathbb{R}^d)$, known as the impulse response of $\mathrm{L}$, such that 
\begin{equation}\label{SchwartzKernelTheorem}
\forall \varphi \in \mathcal{S}(\mathbb{R}^d):  \mathrm{L}\{\varphi \}(\cdot)= \int_{\mathbb{R}^d}   h(\cdot-\boldsymbol{y})\varphi (\boldsymbol{y}) \mathrm{d}\boldsymbol{y}.
\end{equation}
\end{theorem}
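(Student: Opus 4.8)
The plan is to obtain this linear shift-invariant (LSI) statement as a specialization of the general Schwartz kernel theorem. Applying the latter to $\mathrm{L}$ (continuous, as that theorem requires) produces a unique tempered distribution $K\in\mathcal{S}'(\mathbb{R}^d\times\mathbb{R}^d)$, the Schwartz kernel of $\mathrm{L}$, with
\[
\langle \mathrm{L}\{\varphi\},\psi\rangle = \langle K,\psi\otimes\varphi\rangle,\qquad \varphi,\psi\in\mathcal{S}(\mathbb{R}^d),
\]
where $(\psi\otimes\varphi)(\boldsymbol{x},\boldsymbol{y})=\psi(\boldsymbol{x})\varphi(\boldsymbol{y})$. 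It then suffices to prove that shift-invariance of $\mathrm{L}$ forces $K$ to have the form $K(\boldsymbol{x},\boldsymbol{y})=h(\boldsymbol{x}-\boldsymbol{y})$ for a unique $h\in\mathcal{S}'(\mathbb{R}^d)$: this is exactly the distributional content of \eqref{SchwartzKernelTheorem}, since the right-hand side there is the convolution $h\ast\varphi$, which is a smooth and slowly growing function.

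Next I would translate the shift-invariance hypothesis into a symmetry of $K$. Pairing the identity $\mathrm{L}\{\varphi(\cdot-\boldsymbol{x}_0)\}=(\mathrm{L}\{\varphi\})(\cdot-\boldsymbol{x}_0)$ with an arbitrary $\psi\in\mathcal{S}(\mathbb{R}^d)$ and rewriting both sides through the kernel formula gives $\langle\tau_{(\boldsymbol{0},-\boldsymbol{x}_0)}K,\psi\otimes\varphi\rangle=\langle\tau_{(\boldsymbol{x}_0,\boldsymbol{0})}K,\psi\otimes\varphi\rangle$ for all $\varphi,\psi$ and all $\boldsymbol{x}_0$, where $\tau$ denotes translation of a distribution. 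Since finite sums of tensor products $\psi\otimes\varphi$ are dense in $\mathcal{S}(\mathbb{R}^{2d})$, this yields $\tau_{(\boldsymbol{x}_0,\boldsymbol{x}_0)}K=K$ for every $\boldsymbol{x}_0$, i.e.\ $K$ is invariant under diagonal translations. Applying the invertible linear change of variables $(\boldsymbol{x},\boldsymbol{y})\mapsto(\boldsymbol{x}-\boldsymbol{y},\boldsymbol{y})$, which acts continuously on $\mathcal{S}'(\mathbb{R}^{2d})$, the transformed kernel $\tilde K(\boldsymbol{u},\boldsymbol{v})$ becomes invariant under all translations in its second block of variables $\boldsymbol{v}$.

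The technical heart is then the lemma that a tempered distribution on $\mathbb{R}^d_{\boldsymbol{u}}\times\mathbb{R}^d_{\boldsymbol{v}}$ invariant under every translation in $\boldsymbol{v}$ must equal $h(\boldsymbol{u})\otimes 1(\boldsymbol{v})$ for a unique $h\in\mathcal{S}'(\mathbb{R}^d)$. One clean route is to observe that translation invariance in $\boldsymbol{v}$ is equivalent to $\partial_{v_j}\tilde K=0$ for $j=1,\dots,d$; taking the partial Fourier transform in $\boldsymbol{v}$ converts this into $\xi_j\,\widehat{\tilde K}=0$, so $\widehat{\tilde K}$ is supported on $\{\boldsymbol{\xi}_{\boldsymbol{v}}=\boldsymbol{0}\}$, and the structure theorem for such distributions together with the annihilation constraints forces $\widehat{\tilde K}=\hat h\otimes\delta(\boldsymbol{\xi}_{\boldsymbol{v}})$; inverting the Fourier transform yields $\tilde K=h\otimes 1$.

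Finally, I would unwind the change of variables: substituting $\tilde K=h\otimes 1$ back and using Fubini gives $\langle\mathrm{L}\{\varphi\},\psi\rangle=\langle h,\psi\ast\varphi^{\vee}\rangle=\langle h\ast\varphi,\psi\rangle$ for all $\psi$, where $\varphi^{\vee}(\boldsymbol{y})=\varphi(-\boldsymbol{y})$; hence $\mathrm{L}\{\varphi\}=h\ast\varphi$, which is precisely \eqref{SchwartzKernelTheorem}. Uniqueness of $h$ is inherited from that of $K$, or seen directly: if $h\ast\varphi=0$ for all $\varphi\in\mathcal{S}(\mathbb{R}^d)$, taking $\varphi$ a nowhere-vanishing Gaussian gives $\hat h=0$, hence $h=0$. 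I expect step three, pinning down the structure of the diagonally translation-invariant kernel, to be the main obstacle; everything else is routine bookkeeping with tensor-product pairings and linear changes of variables in $\mathcal{S}'(\mathbb{R}^{2d})$.
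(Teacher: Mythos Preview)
The paper does not actually prove this statement: it merely recalls the theorem from the literature, pointing to \cite{simon1971distributions} for a proof of the general kernel theorem. So there is no ``paper's own proof'' to compare against.

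That said, your proposal is a correct and standard derivation of the LSI specialization from the general Schwartz kernel theorem. The logic is sound: the continuity hypothesis gives a unique kernel $K\in\mathcal{S}'(\mathbb{R}^{2d})$; shift-invariance translates into diagonal translation-invariance of $K$; the linear change of variables reduces this to translation-invariance in one block; and the Fourier-side argument (support on $\{\boldsymbol{\xi}_{\boldsymbol{v}}=\boldsymbol{0}\}$ together with the multiplication constraints $\xi_j\widehat{\tilde K}=0$ killing all derivative-of-delta terms) correctly pins down $\tilde K=h\otimes 1$. One small remark: the theorem as stated in the paper does not explicitly assume continuity of $\mathrm{L}$, but you are right that the general kernel theorem needs it, and the paper does impose continuity immediately afterward when it restricts to operators on $\mathcal{S}'(\mathbb{R}^d)$, so this is not a genuine gap.
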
 

In this paper, we restrict ourselves to the class of continuous LSI operators that have an extended domain and are defined over the space of tempered distributions $\mathcal{S}'(\mathbb{R}^d)$. One can fully characterize this class in   the Fourier domain. The Fourier transform is a well-defined and continuous operator over $\mathcal{S}'(\mathbb{R}^d)$ and is denoted by $\mathcal{F}: \mathcal{S}'(\mathbb{R}^d) \rightarrow \mathcal{S}'(\mathbb{R}^d)$. Consequently, the frequency response of the LSI operator $\mathrm{L}:\mathcal{S}'(\mathbb{R}^d)\rightarrow\mathcal{S}'(\mathbb{R}^d)$ is defined as the Fourier transform of its impulse response 
\begin{equation}\label{Eq:FourierLSI}
\widehat{\mathrm{L}}(\boldsymbol{\omega}) \eqdef \mathcal{F}\{ \mathrm{L}\{ \delta\} \} (\boldsymbol{\omega}).
\end{equation}
It is known that the frequency response of any continuous LSI operator over $\mathcal{S}'(\mathbb{R}^d)$ is a smooth and slowly growing function \cite{schwartz1957theorie}. Additionally, any smooth and slowly growing function $\widehat{\mathrm{L}}(\boldsymbol{\cdot})$ defines an LSI and continuous operator $\mathrm{L}: \mathcal{S}'(\mathbb{R}^d)\rightarrow\mathcal{S}'(\mathbb{R}^d)$ via 
\begin{equation}\label{Eq:LSIFourier}
\mathrm{L}\{f\} = \mathcal{F}^{-1}\lbrace \widehat{\mathrm{L}}  \widehat{f} \rbrace.
\end{equation}
Typical examples of such operators are polynomials of derivative in dimension $d=1$ and polynomials of the Laplacian operator for $d>1$ \cite{duchon1977splines}.  
\section{Banach-Space Kernels}
In this section, we introduce our Banach-space framework of learning with   gTV regularization.    We start by defining the class of kernel-admissible operators. 
  
\begin{definition}\label{Def:KernelAdmis}
The  linear operator $\mathrm{L}:\mathcal{S}'(\mathbb{R}^d)\rightarrow \mathcal{S}'(\mathbb{R}^d)$ is called kernel-admissible (or simply admissible) if 
\begin{enumerate}[label=(\roman*)]
\item it is shift-invariant\footnote{Although the notion of shift-invariant operators in \eqref{Eq:LSIdefSch} is defined for operators acting on Schwartz functions, one can extend it by duality to those whose domain is $\mathcal{S}'(\mathbb{R}^d)$. For more details on extension by duality, we refer to \cite[Section 3.3.2]{unser2014introduction}.};  \label{C:SI}
\item it is an isomorphism over $\mathcal{S}'(\mathbb{R}^d)$, meaning that it is continuous and invertible, its inverse being the continuous operator $\mathrm{L}^{-1}:\mathcal{S}'(\mathbb{R}^d)\rightarrow\mathcal{S}'(\mathbb{R}^d)$; \label{C:ISO} 
\item the sampling functional $\delta_{\boldsymbol{x}_0}:{f}\mapsto {f}(\boldsymbol{x}_0)$ is weak*-continuous in the topology of its native space (see Definition \ref{Def:NativeSpace} and Theorem \ref{Thm:NativeSpace}).   \label{C:Weak}
\end{enumerate}
\end{definition} 

The restriction to LSI operators is not crucial to our framework. However, it lends itself to the convenience of  an  analysis in the Fourier domain. It also allows us to    provide necessary and sufficient conditions to characterize the class of admissible operators (see Theorem \ref{Thm:KernelAdmisOp}).  The invertibility  assumption, on the other hand, is essential  to have decaying kernels; that is, to have   ${\rm k}(\boldsymbol{x}-\boldsymbol{y})\rightarrow 0$ whenever $\|\boldsymbol{x}-\boldsymbol{y}\|\rightarrow \infty$. In fact, it is known that the Green's function of any LSI  operator with a nontrivial null space necessarily has a singularity in the Fourier domain at the origin  \cite{unser2014introduction}. Finally, the assumption of  the (weak*) continuity of the sampling functional is a natural choice in learning theory. The main motivation here is to guarantee the (weak*) lower semicontinuity of the global cost functional in \eqref{Pb:MKRIntro}. This can be used, together with the generalized Weierstrass theorem, to prove  the existence of solutions (see Theorem \ref{Thm:Main}). Let us note that  the definition of weak*-continuity depends on the Banach structure of the native space. In the sequel, we first  properly define native spaces and then specify their underlying Banach structures. 
 
\begin{definition}\label{Def:NativeSpace}
The native space of the LSI isomorphism $\mathrm{L}:\mathcal{S}'(\mathbb{R}^d)\rightarrow \mathcal{S}'(\mathbb{R}^d)$ is  the  pre-image  of $\mathrm{L}$ over the space of bounded Radon measures; that is, the space $\mathcal{M}_{\mathrm{L}}(\mathbb{R}^d)= \mathrm{L}^{-1}\{ \mathcal{M}(\mathbb{R}^d)\}$. 
\end{definition}

Theorem \ref{Thm:NativeSpace} summarizes the important properties of the native spaces.  Its proof is available in  Appendix A.
\begin{theorem}\label{Thm:NativeSpace}
Let $\mathrm{L}:\mathcal{S}'(\mathbb{R}^d ) \rightarrow \mathcal{S}'(\mathbb{R}^d )$ be an LSI  isomorphism over $\mathcal{S}'(\mathbb{R}^d  )$. Then, its native space  is a topological vector space with the following properties:
\begin{enumerate}[label=(\roman*)]
\item It is a Banach space equipped with the generalized total-variation norm \label{It:MLBanach}
\begin{equation}
\mathrm{gTV}(f) = \|f\|_{\mathcal{M}_\mathrm{L}} \eqdef \|\mathrm{L}\{f \}\|_{\mathcal{M}}.
\end{equation}

\item The restriction of $\mathrm{L}$ to its native space results in the  isomorphism $\mathrm{L}:\mathcal{M}_{\mathrm{L}}(\mathbb{R}^d )  \rightarrow \mathcal{M}(\mathbb{R}^d )$. \label{It:MLM}

\item The adjoint operator $\mathrm{L}^*$ is well-defined over $\mathcal{C}_0(\mathbb{R}^d )$  and its image is the Banach space $\mathcal{C}_{\mathrm{L}}(\mathbb{R}^d )$ with the norm  $\|f\|_{\mathcal{C}_\mathrm{L}}\eqdef\|\mathrm{L}^{-1*}\{f\}\|_\infty$.  \label{It:Adjoint}
\item The space $\mathcal{C}_{\mathrm{L}}(\mathbb{R}^d )$  is the predual of $\mathcal{M}_\mathrm{L}(\mathbb{R}^d )$, meaning that $(\mathcal{C}_\mathrm{L}(\mathbb{R}^d ))'=\mathcal{M}_\mathrm{L}(\mathbb{R}^d )$. \label{It:Predual}

\item The space of Schwartz  functions is   embedded in the native space. Moreover, the native space itself is  densely embedded in the space of tempered distributions. The embedding hierarchy is indicated as   \label{It:Embedding}
\begin{equation}
\mathcal{S}(\mathbb{R}^d )  {\hookrightarrow} \mathcal{M}_{\mathrm{L}}(\mathbb{R}^d ) \stackrel{d.}{\hookrightarrow}  \mathcal{S}'(\mathbb{R}^d ).
\end{equation}
\end{enumerate}
\end{theorem}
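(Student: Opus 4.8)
The plan is to prove the five properties of Theorem~\ref{Thm:NativeSpace} by systematically transporting the known Banach structure of $\mathcal{M}(\mathbb{R}^d)$ and $\mathcal{C}_0(\mathbb{R}^d)$ through the isomorphism $\mathrm{L}$ and its inverse. The guiding principle is that $\mathrm{L}:\mathcal{S}'(\mathbb{R}^d)\to\mathcal{S}'(\mathbb{R}^d)$ being a topological isomorphism means it restricts to an isometry between appropriately normed subspaces, so each structural fact on the measure side pulls back to a structural fact on the native space side.

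\textbf{Step 1 (Property \ref{It:MLBanach}).} I would first check that $\|f\|_{\mathcal{M}_\mathrm{L}}\eqdef\|\mathrm{L}\{f\}\|_\mathcal{M}$ is genuinely a norm on $\mathcal{M}_\mathrm{L}(\mathbb{R}^d)$: homogeneity and the triangle inequality are inherited from $\|\cdot\|_\mathcal{M}$ by linearity of $\mathrm{L}$, and definiteness follows because $\mathrm{L}$ is injective on $\mathcal{S}'(\mathbb{R}^d)$ (so $\|\mathrm{L}\{f\}\|_\mathcal{M}=0\Rightarrow \mathrm{L}\{f\}=0\Rightarrow f=0$). Completeness is the crux: given a Cauchy sequence $\{f_k\}$ in $\mathcal{M}_\mathrm{L}$, the sequence $\{\mathrm{L}\{f_k\}\}$ is Cauchy in the Banach space $\mathcal{M}(\mathbb{R}^d)$, hence converges to some $w\in\mathcal{M}(\mathbb{R}^d)$; then set $f\eqdef\mathrm{L}^{-1}\{w\}$, which lies in $\mathcal{M}_\mathrm{L}$ by definition, and $\|f_k-f\|_{\mathcal{M}_\mathrm{L}}=\|\mathrm{L}\{f_k\}-w\|_\mathcal{M}\to 0$. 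By construction $\mathrm{L}:\mathcal{M}_\mathrm{L}(\mathbb{R}^d)\to\mathcal{M}(\mathbb{R}^d)$ is then a surjective linear isometry, hence an isomorphism of Banach spaces, which is exactly Property~\ref{It:MLM}.

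\textbf{Step 2 (Properties \ref{It:Adjoint} and \ref{It:Predual}).} Here I would use the classical duality $\big(\mathcal{C}_0(\mathbb{R}^d)\big)'=\mathcal{M}(\mathbb{R}^d)$ recalled in Section~2. Since $\mathrm{L}:\mathcal{S}'\to\mathcal{S}'$ is a continuous isomorphism, its inverse $\mathrm{L}^{-1}$ has an adjoint $\mathrm{L}^{-1*}$, and I would show $\mathrm{L}^{-1*}$ maps $\mathcal{C}_0(\mathbb{R}^d)$ injectively onto a subspace I name $\mathcal{C}_\mathrm{L}(\mathbb{R}^d)$, which becomes a Banach space when given the transported norm $\|f\|_{\mathcal{C}_\mathrm{L}}=\|\mathrm{L}^{-1*}\{f\}\|_\infty$ (completeness again by transport, exactly as in Step~1). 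The identity $\langle \mathrm{L}\{f\},\varphi\rangle=\langle f,\mathrm{L}^*\{\varphi\}\rangle$ for $f\in\mathcal{M}_\mathrm{L}$, $\varphi\in\mathcal{S}$, combined with density of $\mathcal{S}$ in $\mathcal{C}_0$ and the isometry of Step~1, lets me identify every bounded linear functional on $\mathcal{C}_\mathrm{L}(\mathbb{R}^d)$ with a unique element of $\mathcal{M}_\mathrm{L}(\mathbb{R}^d)$ and conversely; that yields $\big(\mathcal{C}_\mathrm{L}(\mathbb{R}^d)\big)'=\mathcal{M}_\mathrm{L}(\mathbb{R}^d)$, i.e. Property~\ref{It:Predual}. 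A technical point worth care is the well-definedness of $\mathrm{L}^*$ on $\mathcal{C}_0(\mathbb{R}^d)$ rather than only on $\mathcal{S}(\mathbb{R}^d)$: I would obtain it by a density/continuity extension argument, showing $\mathrm{L}^*$ maps $\mathcal{S}$ into a space on which the $\mathcal{C}_\mathrm{L}$-norm estimate holds and then extending.

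\textbf{Step 3 (Property \ref{It:Embedding}).} For $\mathcal{S}(\mathbb{R}^d)\hookrightarrow\mathcal{M}_\mathrm{L}(\mathbb{R}^d)$ I would argue that since $\mathrm{L}:\mathcal{S}'\to\mathcal{S}'$ is continuous and $\mathcal{S}$ is (in the relevant sense) preserved — more precisely, since $\widehat{\mathrm{L}}$ is smooth and slowly growing, $\mathrm{L}$ maps $\mathcal{S}$ continuously into itself, and $\mathcal{S}(\mathbb{R}^d)\subseteq L_1(\mathbb{R}^d)\subseteq\mathcal{M}(\mathbb{R}^d)$ — so $\varphi\in\mathcal{S}\Rightarrow\mathrm{L}\{\varphi\}\in\mathcal{M}\Rightarrow\varphi\in\mathcal{M}_\mathrm{L}$, with continuity of the inclusion following from continuity of $\mathrm{L}:\mathcal{S}\to\mathcal{S}$ and the seminorm bound $\|\mathrm{L}\{\varphi\}\|_\mathcal{M}\le\|\mathrm{L}\{\varphi\}\|_{L_1}$. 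For $\mathcal{M}_\mathrm{L}(\mathbb{R}^d)\stackrel{d.}{\hookrightarrow}\mathcal{S}'(\mathbb{R}^d)$, continuity of the inclusion is immediate from continuity of $\mathrm{L}^{-1}$ on $\mathcal{S}'$, and density follows because $\mathcal{S}$ is dense in $\mathcal{S}'$ and $\mathcal{S}\subseteq\mathcal{M}_\mathrm{L}$ by the first part. I expect the main obstacle to be Step~2, specifically pinning down $\mathcal{C}_\mathrm{L}(\mathbb{R}^d)$ as a concrete Banach space and rigorously establishing the predual relationship $\big(\mathcal{C}_\mathrm{L}\big)'=\mathcal{M}_\mathrm{L}$ — transporting a predual through an adjoint requires checking that $\mathrm{L}^{-1*}(\mathcal{C}_0)$ is not merely an algebraic subspace but that the weak*-topology it induces on $\mathcal{M}_\mathrm{L}$ matches the one coming from the transported structure, and that $\mathrm{L}^*$ really does extend continuously from $\mathcal{S}$ to $\mathcal{C}_0$; the other four items are essentially bookkeeping once the isometric transport principle is set up.
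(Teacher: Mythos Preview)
Your proposal is correct and follows essentially the same transport-of-structure argument as the paper: pull back the Banach structure of $\mathcal{M}(\mathbb{R}^d)$ and $\mathcal{C}_0(\mathbb{R}^d)$ through the isomorphism $\mathrm{L}$, obtaining the isometry $\mathrm{L}:\mathcal{M}_\mathrm{L}\to\mathcal{M}$ and the predual via the adjoint. The only tactical differences are that the paper obtains $\mathrm{L}^*$ on $\mathcal{C}_0$ by restricting the Banach adjoint of the isometry from Part~\ref{It:MLM} (rather than your density extension from $\mathcal{S}$), derives Property~\ref{It:Predual} via the double adjoint $\mathrm{L}^{**}:(\mathcal{C}_\mathrm{L})'\to\mathcal{M}$, and shows $\mathrm{L}(\mathcal{S})\subseteq\mathcal{S}$ through the flip relation $\mathrm{L}\{\varphi\}=\mathrm{L}^*\{\varphi^\vee\}^\vee$ instead of your Fourier-multiplier argument---but these are minor variations on the same idea.
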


  \begin{figure}[t]
\begin{minipage}{1.0\linewidth}
  \centering
  \centerline{\includegraphics[width=\linewidth]{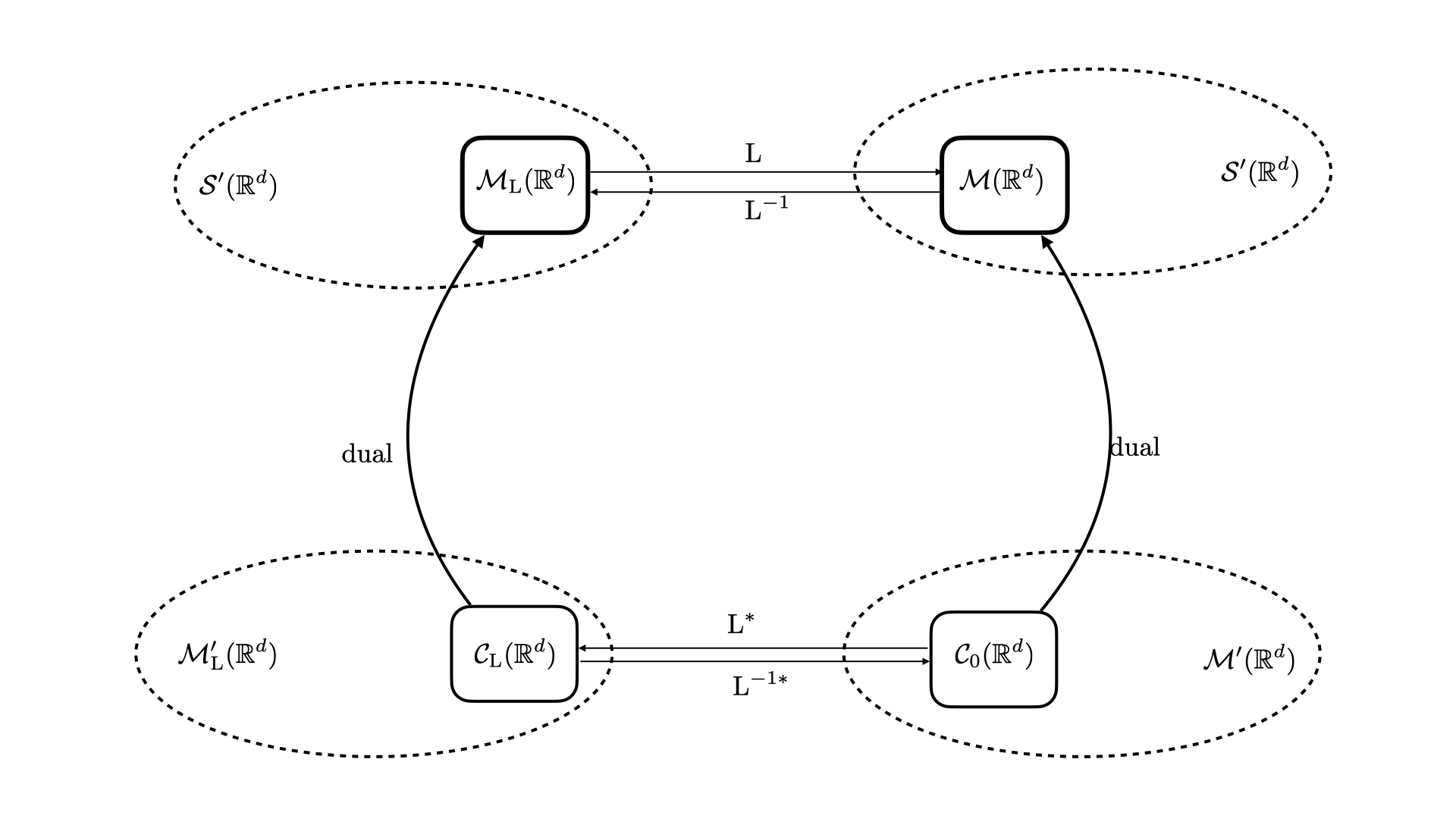}}
  \caption{  A Schematic diagram that illustrates the Banach spaces of interest.}\label{Fig:Diagram} \medskip
\end{minipage}
\end{figure}
  
We have summarized the Banach spaces and the mappings between them in Figure \ref{Fig:Diagram}. Due to Theorem \ref{Thm:NativeSpace}, the weak*-continuity of the sampling functional (Condition \ref{C:Weak} in Definition \ref{Def:KernelAdmis})  is equivalent to the   inclusion of the shifted Dirac impulses  in the predual of the native space. In other words, for all $\boldsymbol{x}_0 \in \mathbb{R}^d$, one should have  that  $\delta(\cdot-\boldsymbol{x}_0)\in\mathcal{C}_{\mathrm{L}}(\mathbb{R}^d)$. 

We now define the shift-invariant  kernel  associated to an admissible operator.  
\begin{definition}\label{Def:Kernel}
The   shift-invariant kernel  associated to the admissible operator $\mathrm{L}:\mathcal{S}'(\mathbb{R}^d)\rightarrow \mathcal{S}'(\mathbb{R}^d)$ is the bivariate function $\mathrm{k}:\mathbb{R}^d\times\mathbb{R}^d\rightarrow\mathbb{R}$ with  $\mathrm{k}(\boldsymbol{x},\boldsymbol{y})=\rho_{\mathrm{L}}(\boldsymbol{x}-\boldsymbol{y})$, where $\rho_{\mathrm{L}}=\mathrm{L}^{-1}\{\delta\}$ is the Green's function of $\mathrm{L}$.  
\end{definition}
In Theorem \ref{Thm:KernelAdmisOp}, we provide the  necessary and sufficient conditions that characterize  the class of admissible LSI operators. The proof can be found in Appendix B.
\begin{theorem}\label{Thm:KernelAdmisOp}
Let $\mathrm{L}$ be an admissible operator. Then, its associated Green's function $\rho_{\mathrm{L}}=\mathrm{L}^{-1}\{\delta\}:\mathbb{R}^d\rightarrow\mathbb{R}$  satisfies the following properties:
\begin{enumerate}[label=(\roman*)]
\item  It is   a continuous function that vanishes at infinity.  In other words, $\rho_{\mathrm{L}}\in\mathcal{C}_0(\mathbb{R}^d)$.  \label{It:kernelcont}
\item Its Fourier transform  $\widehat{\rho_\mathrm{L}}(\boldsymbol{\omega})$ is a smooth, non-vanishing, slowly growing, and heavy-tailed function of $\boldsymbol{\omega}$. \label{It:Slowgrowth}
\end{enumerate}
Additionally, any function $\rho:\mathbb{R}^d \rightarrow\mathbb{R}$ that satisfies these properties can be appointed to an  admissible operator $\mathrm{L}:\mathcal{S}'(\mathbb{R}^d)\rightarrow\mathcal{S}'(\mathbb{R}^d)$  defined as 
\begin{equation}
\mathrm{L}\{ f  \} = \mathcal{F}^{-1} \left\{\frac{\widehat{f}(\boldsymbol{\omega})}{ \widehat{\rho}(\boldsymbol{\omega})} \right\}.
\end{equation}
\end{theorem}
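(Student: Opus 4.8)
The plan is to prove the two implications by converting the abstract conditions of Definition \ref{Def:KernelAdmis} into pointwise statements about $\widehat{\rho_{\mathrm{L}}}$ and about $\rho_{\mathrm{L}}$ itself, using the Fourier characterization of continuous LSI operators over $\mathcal{S}'(\mathbb{R}^d)$ from Section 2 together with Theorem \ref{Thm:NativeSpace}. For the \emph{necessity} part, assume $\mathrm{L}$ is admissible. Being a continuous LSI operator over $\mathcal{S}'(\mathbb{R}^d)$, $\mathrm{L}$ has a smooth and slowly growing frequency response $\widehat{\mathrm{L}}$. Invertibility forces $\mathrm{L}^{-1}$ to commute with shifts as well (apply $\mathrm{L}^{-1}$ to the identity $\mathrm{L}\{(\mathrm{L}^{-1}\varphi)(\cdot-\boldsymbol{x}_0)\}=(\mathrm{L}\mathrm{L}^{-1}\varphi)(\cdot-\boldsymbol{x}_0)$), so $\mathrm{L}^{-1}$ is itself a continuous LSI operator and carries a smooth, slowly growing multiplier $m$; from $\mathrm{L}\mathrm{L}^{-1}=\mathrm{Id}$ we read off $\widehat{\mathrm{L}}\,m=1$, so $\widehat{\mathrm{L}}$ is non-vanishing and $m=1/\widehat{\mathrm{L}}=\widehat{\rho_{\mathrm{L}}}$. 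Hence $\widehat{\rho_{\mathrm{L}}}$ is smooth, non-vanishing and slowly growing, and since its reciprocal $\widehat{\mathrm{L}}$ is slowly growing, $\widehat{\rho_{\mathrm{L}}}$ is heavy-tailed; this is item \ref{It:Slowgrowth}. For item \ref{It:kernelcont}, I invoke the remark following Theorem \ref{Thm:NativeSpace}: condition \ref{C:Weak} is equivalent to $\delta(\cdot-\boldsymbol{x}_0)\in\mathcal{C}_{\mathrm{L}}(\mathbb{R}^d)$ for all $\boldsymbol{x}_0$, i.e., $\mathrm{L}^{-1*}\{\delta(\cdot-\boldsymbol{x}_0)\}\in\mathcal{C}_0(\mathbb{R}^d)$. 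Since $\mathrm{L}^{-1}$ is LSI with impulse response $\rho_{\mathrm{L}}$, its adjoint is LSI with impulse response the reflection $\rho_{\mathrm{L}}^{\vee}$, so $\mathrm{L}^{-1*}\{\delta(\cdot-\boldsymbol{x}_0)\}=\rho_{\mathrm{L}}^{\vee}(\cdot-\boldsymbol{x}_0)$; taking $\boldsymbol{x}_0=\boldsymbol{0}$ and using reflection-invariance of $\mathcal{C}_0(\mathbb{R}^d)$ yields $\rho_{\mathrm{L}}\in\mathcal{C}_0(\mathbb{R}^d)$.

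For the \emph{sufficiency} part, let $\rho$ satisfy \ref{It:kernelcont}--\ref{It:Slowgrowth} and set $\mathrm{L}\{f\}=\mathcal{F}^{-1}\{\widehat{f}/\widehat{\rho}\}$. Smoothness and non-vanishing of $\widehat{\rho}$ make $1/\widehat{\rho}$ smooth, heavy-tailedness of $\widehat{\rho}$ makes $1/\widehat{\rho}$ slowly growing, and a quotient/Leibniz-rule induction propagates slow growth to all derivatives, so $1/\widehat{\rho}$ is a bona fide $\mathcal{S}'$-multiplier and $\mathrm{L}$ is a continuous LSI operator over $\mathcal{S}'(\mathbb{R}^d)$; likewise $f\mapsto\mathcal{F}^{-1}\{\widehat{\rho}\,\widehat{f}\}$ is a continuous LSI operator, and the two are mutually inverse, which gives \ref{C:SI} and \ref{C:ISO}. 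Since $\mathrm{L}^{-1}\{\delta\}=\mathcal{F}^{-1}\{\widehat{\rho}\}=\rho$, the function $\rho$ is indeed the Green's function of $\mathrm{L}$ in the sense of Definition \ref{Def:Kernel}. Finally, repeating the adjoint computation, $\mathrm{L}^{-1*}\{\delta(\cdot-\boldsymbol{x}_0)\}=\rho^{\vee}(\cdot-\boldsymbol{x}_0)\in\mathcal{C}_0(\mathbb{R}^d)$ by \ref{It:kernelcont}, so $\delta(\cdot-\boldsymbol{x}_0)\in\mathcal{C}_{\mathrm{L}}(\mathbb{R}^d)$ and $\delta_{\boldsymbol{x}_0}$ is weak*-continuous by the same remark; hence \ref{C:Weak} holds and $\mathrm{L}$ is admissible.

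The step I expect to be the main obstacle is the rigorous two-way passage between the operator-level isomorphism condition \ref{C:ISO} over $\mathcal{S}'(\mathbb{R}^d)$ and the pointwise conditions on $\widehat{\rho}$: namely, showing that the inverse of a continuous LSI bijection of $\mathcal{S}'(\mathbb{R}^d)$ is again a continuous LSI operator whose multiplier is exactly $1/\widehat{\mathrm{L}}$, so that invertibility over $\mathcal{S}'(\mathbb{R}^d)$ is \emph{equivalent} to $\widehat{\mathrm{L}}$ being non-vanishing with a slowly growing reciprocal (equivalently, $\widehat{\rho_{\mathrm{L}}}$ non-vanishing, smooth, slowly growing and heavy-tailed). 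This rests on the full Schwartz-type characterization of LSI operators recalled in Section 2 and on the bookkeeping that the class of smooth, slowly growing functions is closed under differentiation and under reciprocals of its non-vanishing, heavy-tailed elements. Once this is in place, identifying the adjoint's impulse response with the reflection $\rho_{\mathrm{L}}^{\vee}$ and using Theorem \ref{Thm:NativeSpace}\ref{It:Adjoint}--\ref{It:Predual} to reduce weak*-continuity of the sampling functional to the membership $\rho_{\mathrm{L}}\in\mathcal{C}_0(\mathbb{R}^d)$ are routine.
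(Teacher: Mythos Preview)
Your proposal is correct and follows essentially the same route as the paper's proof: both directions hinge on the Fourier characterization of continuous LSI operators over $\mathcal{S}'(\mathbb{R}^d)$ from Section~2, the remark after Theorem~\ref{Thm:NativeSpace} that reduces weak*-continuity of $\delta_{\boldsymbol{x}_0}$ to membership of $\delta(\cdot-\boldsymbol{x}_0)$ in $\mathcal{C}_{\mathrm{L}}(\mathbb{R}^d)$, the identification of the adjoint's impulse response with the reflection $\rho_{\mathrm{L}}^{\vee}$, and the Leibniz/quotient-rule induction showing that $1/\widehat{\rho}$ is smooth and slowly growing. The only cosmetic addition in your write-up is the explicit verification that $\mathrm{L}^{-1}$ inherits shift-invariance, which the paper takes for granted.
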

\begin{remark}
We have stated in Section \ref{Sec:LinOp}  a well-known result by Schwartz that determines the general family  of LSI  operators (not necessarily invertible) over $\mathcal{S}'(\mathbb{R}^d)$. In Theorem \ref{Thm:KernelAdmisOp}, particularly via Condition \ref{It:Slowgrowth}, we are excluding the noninvertible members of this family. Hence,   Condition \ref{It:Slowgrowth} fully characterizes the class of linear isomorphisms over $\mathcal{S}'(\mathbb{R}^d)$. 
\end{remark}
Using Theorem \ref{Thm:KernelAdmisOp}, we now draw a connection to the  well-known class of reproducing kernels, which are constrained to be symmetric (because of their positive-definiteness). 
\begin{corollary}\label{Corollary}
Any symmetric admissible kernel (in the sense of Theorem \ref{Thm:KernelAdmisOp}) is a shift-invariant reproducing kernel up to multiplication by a sign factor. 
\end{corollary}
\begin{proof}
Let ${\rm k}(\cdot,\cdot)$ be a symmetric and shift-invariant admissible kernel.  Then, the corresponding Green's function ${\rho}_{\rm L}$ is also a symmetric function and, hence, its Fourier transform $\widehat{\rho}_{\rm L}(\boldsymbol{\omega})$ is a real function that is also smooth and non-vanishing. Hence, the sign of $\widehat{\rho}_{\rm L}(\boldsymbol{\omega})$  is constant everywhere. By multiplying with a sign factor, we can  then assume that $\widehat{\rho}_{\rm L}(\boldsymbol{\omega})$ is positive everywhere. Now by invoking Bochner's theorem (see, for example, \cite[Appendix B]{unser2014introduction}), we  deduce that ${\rho}_{\rm L}$ is a positive-definite function which, together with the symmetric assumption, implies that  ${\rm k}(\cdot,\cdot)$     is indeed a reproducing kernel.
\end{proof}

The practical implication of Theorem \ref{Thm:KernelAdmisOp} is that it yields  Fourier-domain criteria to determine the admissibility of an operator $\mathrm{L}$.  In particular,  and due to the Rieman-Lebesgue lemma, if $\widehat{\rho_\mathrm{L}}$ is an absolutely integrable function then condition \ref{It:kernelcont} holds. 

As the last part of this section, we  use this characterization to introduce  some families of admissible kernels. Our first example is made of super-exponential kernels  defined as 
\begin{equation}
\mathrm{k}_{\alpha} ( \boldsymbol{x},\boldsymbol{y}) = \exp(-\| \boldsymbol{x}-\boldsymbol{y} \|_{\alpha}^\alpha), \quad \alpha\in(0,2),
\end{equation}
where $\|\boldsymbol{x}\|_{\alpha}= \left( \sum_{i=1}^d |x_i|^\alpha \right)^{\frac{1}{\alpha}}$ for any $\boldsymbol{x}=(x_i)\in\mathbb{R}^d$. These functions are known to be positive-definite \cite[Appendix B]{unser2014introduction}. Their inverse Fourier transforms (the so-called $\alpha$-stable distributions) are heavy-tailed and infinitely smooth, with algebraically decaying derivatives of any order \cite[Chapter 5]{sato1999levy}. Hence, they satisfy the   conditions of Theorem \ref{Thm:KernelAdmisOp}. Note that the classical Gaussian kernels are excluded because their frequency responses are not heavy-tailed. However, one can get arbitrarily close  by letting  $\alpha$ tend to its critical value $2$.  Moreover, there are arguments  in regularized RKHS  that  support the use of Gaussian kernels. For example, in   \cite{girosi1993priors,smola1998connection,yuille1998motion}, the Gaussian RKHS has been implicitly characterized  by using the Taylor expansion of the corresponding regularization operator. Further, \cite{steinwart2006explicit} uses the notion of holomorphic functions to explicitly characterize Gaussian RKHS.   We conjecture that the present Banach-space formulation can be extended to cover Gaussian kernels as well. However, this requires one to consider a space larger  than $\mathcal{S}'(\mathbb{R})$. 

 Our second example is made of  Bessel potentials  used in kernel estimation  \cite{aronszajn1961theory}. For a positive real number $s>d$, we consider the operator $(\mathrm{I}-\Delta)^{\frac{s}{2}}:\mathcal{S}'(\mathbb{R}^d)\rightarrow\mathcal{S}'(\mathbb{R}^d)$, where $\Delta$ is the Laplacian operator.  The Bessel potentials  are  the Green's function of such operators. They correspond to the shift-invariant kernels
\begin{equation}\label{Eq:BesselPot}
G_s(\boldsymbol{x},\boldsymbol{y})= \mathcal{F}^{-1}\left\lbrace\frac{1}{(1+\|\boldsymbol{\omega}\|_2^2)^{\frac{s}{2}}}\right\rbrace(\boldsymbol{x}-\boldsymbol{y}).
\end{equation} 
Clearly, the function $\frac{1}{(1+\|\boldsymbol{\omega}\|_2^2)^{\frac{s}{2}}}$ is in $L_1(\mathbb{R}^d)$ for $s>d$. By invoking the Riemann-Lebesgue lemma, we deduce that its inverse Fourier transform is a continuous function that vanishes at infinity. Hence, the kernel function $G_s(\cdot,\cdot)$ satisfies Property \ref{It:kernelcont} of Theorem \ref{Thm:KernelAdmisOp}. Moreover, from the Fourier-domain definition  \eqref{Eq:BesselPot} of $G_s(\cdot,\cdot)$, it can be seen that Property \ref{It:Slowgrowth} also holds. Together, we deduce the admissibility of these kernels. We  remark that the Bessel potential kernels are rotation-invariant as well.
 
 Our final example is a general class of separable shift-invariant kernels of the form 
\begin{equation}\label{Eq:Separable}
{\rm k}(\boldsymbol{x},\boldsymbol{y}) = \prod_{i=1}^d \rho_{\rm L}(x_i-y_i),
\end{equation}
where ${\rm L}:\mathcal{S}'(\mathbb{R})\rightarrow\mathcal{S}'(\mathbb{R})$ is a stable rational operator whose frequency response is  of the form $\widehat{\rm L}(\omega)= \frac{P(\omega)}{Q(\omega)}$, where $P$ and $Q$ are polynomials with no real roots such that ${\rm deg}(P) \geq {\rm deg}(Q)+2$. Since  $\widehat{\rm L}(\omega)$ is real, we conclude that that the tail of $\widehat{\rm L}(\omega)^{-1} = \frac{Q(\omega)}{P(\omega)}$ behaves like $\omega^{-2}$ and  is absolutely integrable  which, together with the Riemann-Lebesgue lemma, implies that $\rho_{\rm L}\in\mathcal{C}_0(\mathbb{R})$. The other conditions of Theorem \ref{Thm:KernelAdmisOp} can be readily shown to be true so  that any separable kernel of the form \eqref{Eq:Separable} is admissible to our theory. 
 
It is worth to mention that one can rotate and dilate any admissible  kernel   by considering an invertible  mixture matrix $\mathbf{A}$  and by defining the transformed kernel as $\mathrm{k}(\mathbf{A} \boldsymbol{x}, \mathbf{A} \boldsymbol{y})$. One readily  verifies that the transformed kernel also satisfies the conditions of Theorem \ref{Thm:KernelAdmisOp} and, hence, is also  admissible.  In Figure \ref{Fig:kernel}, we have plotted the super-exponential and Bessel-potential kernels in  dimension  $d=1$ for different sets of parameters. It can be seen that the width and regularity of these kernels can be adjusted through their parameters. This can be exploited in our framework of learning with multiple kernels to  benefit from this diversity. We shall illustrate this numerically in Section 5.

\begin{figure}[t]
\begin{minipage}{1.0\linewidth}
  \centering
  \centerline{\includegraphics[width=\linewidth]{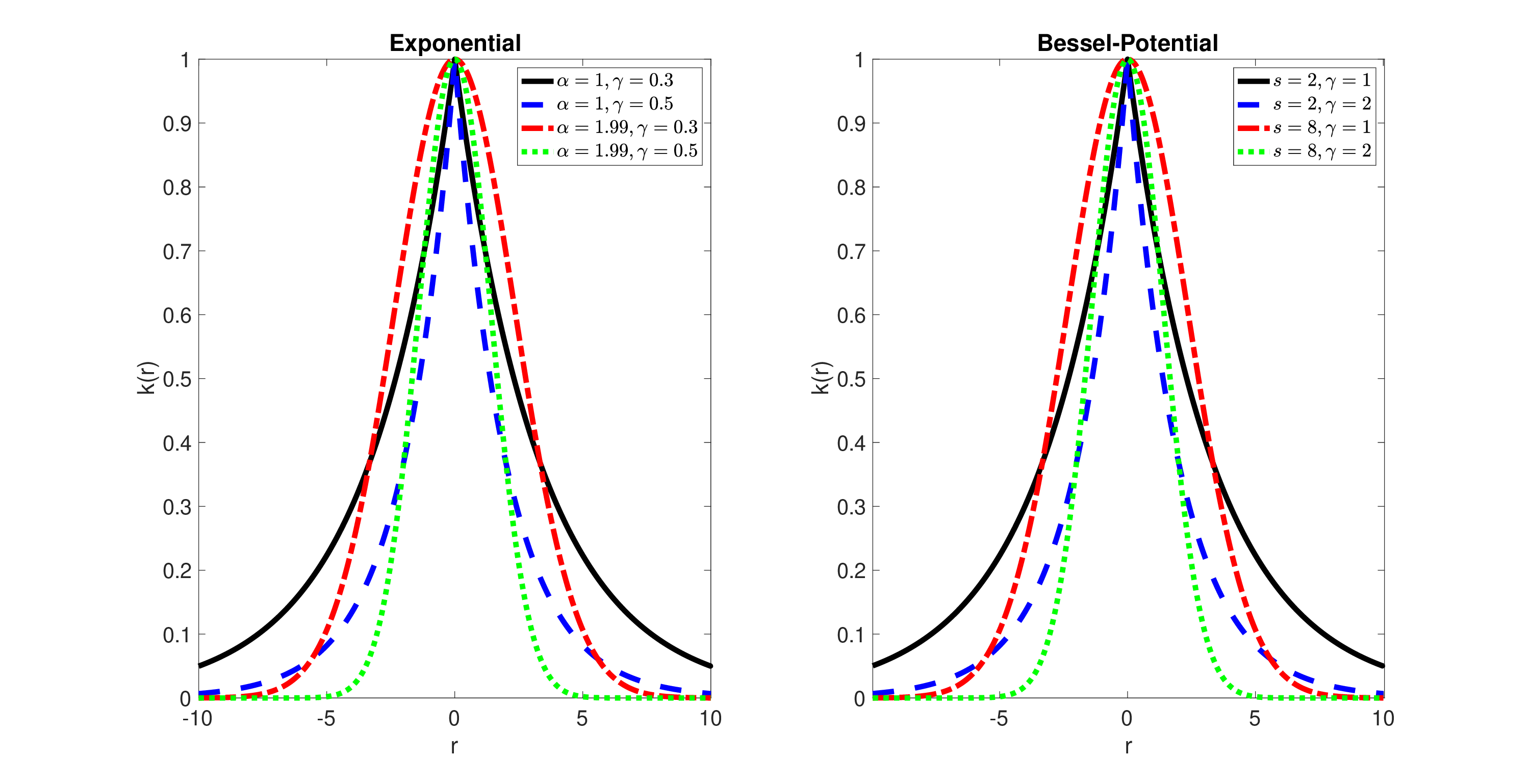}}
  \caption{  Super-exponential kernels $\mathrm{k}_{\alpha}(\boldsymbol{r})=\exp(-\gamma\|\boldsymbol{r}\|_\alpha^\alpha)$ (left) and  Bessel-potential kernels  $G_s(\gamma \boldsymbol{r})$ (right), where $\boldsymbol{r}=(\boldsymbol{x}-\boldsymbol{y})$. The plots are in the special case $d=1$. The parameters ($\alpha \in (0,2)$  and $s>2$) and $\gamma >0$ adjust smoothness and width of the kernel, respectively.}
  \label{Fig:kernel} \medskip
\end{minipage}
\end{figure}

\section{Multiple-Kernel Regression}

In this section, we prove our main result: the representer theorem of multiple-kernel regression with gTV regularization. In effect, the gTV norm will force the learned function to use the fewest active kernels.

\begin{theorem}[Multiple-kernel regression with gTV]\label{Thm:Main}
Given a training dataset that consists of $M$ distinct pairs $(\boldsymbol{x}_m,y_m)$ for $m=1,2,\ldots,M$, we consider the minimization problem
\begin{equation}\label{Pb:MKR}
\min_{\stackrel{ f_n\in\mathcal{M}_{\mathrm{L}_n}(\mathbb{R}^d),}{ f=\sum_{n=1}^N f_n} } \left( \sum_{m=1}^M \mathrm{E}({f}(\boldsymbol{x}_m),{y}_m)+ \lambda \sum_{n=1}^N \|\mathrm{L}_n\{{f}_n\}\|_\mathcal{M}\right), 
\end{equation}
where  $\mathrm{E}(\cdot,{y})$ is a   strictly convex nonnegative function and $\mathrm{L}_n$ is a kernel-admissible operator in the sense of Definition \ref{Def:KernelAdmis} for $n=1,2,\ldots,N$.    Then, the solution set of this problem is nonempty, convex, and weak*-compact. For any of its extreme points  $(f_1,f_2,\ldots,f_N)$, we have the kernel expansions
\begin{equation}\label{Eq:MKRVec}
f_n= \sum_{l=1}^{M_n}  a_{n,l} \mathrm{k}_n(\cdot,\boldsymbol{z}_{n,l}), \quad n=1,2,\ldots,N
\end{equation}
 for its components, where $a_{n,l}\in\mathbb{R}$ are kernel weights, $\boldsymbol{z}_{n,l}\in\mathbb{R}^d$ are adaptive kernel positions, and $\mathrm{k}_n:\mathbb{R}^d\times\mathbb{R}^d\rightarrow\mathbb{R}$ is the shift-invariant kernel  associated to the regularization operator $\mathrm{L}_n$ for $n=1,2,\ldots,N$. Moreover, the number of active kernels is upper-bounded by the number of data points, so that $\sum_{n=1}^N M_n \leq M$.
\end{theorem}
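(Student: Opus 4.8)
The plan is to recast \eqref{Pb:MKR} as a sparse measure-recovery problem on a dual Banach space and then to characterize the extreme points of its solution set. First I would change variables: for each $n$ I set $w_n=\mathrm{L}_n\{f_n\}$, which by Theorem~\ref{Thm:NativeSpace} (items \ref{It:MLBanach} and \ref{It:MLM}) is an isometric bijection $\mathcal{M}_{\mathrm{L}_n}(\mathbb{R}^d)\to\mathcal{M}(\mathbb{R}^d)$ with inverse $f_n=\mathrm{L}_n^{-1}\{w_n\}=\rho_{\mathrm{L}_n}*w_n$. Then $f_n(\boldsymbol{x}_m)=\langle w_n,\rho_{\mathrm{L}_n}(\boldsymbol{x}_m-\cdot)\rangle$, and Theorem~\ref{Thm:KernelAdmisOp}\,\ref{It:kernelcont} ensures that each sampling functional $\nu_{n,m}\eqdef\rho_{\mathrm{L}_n}(\boldsymbol{x}_m-\cdot)=\mathrm{k}_n(\boldsymbol{x}_m,\cdot)$ lies in $\mathcal{C}_0(\mathbb{R}^d)$. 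Hence \eqref{Pb:MKR} is equivalent to
\[
\min_{(w_1,\dots,w_N)\in\mathcal{M}(\mathbb{R}^d)^N}\ \sum_{m=1}^M\mathrm{E}\!\Bigl(\,\textstyle\sum_{n=1}^N\langle w_n,\nu_{n,m}\rangle,\ y_m\Bigr)+\lambda\sum_{n=1}^N\|w_n\|_{\mathcal{M}},
\]
a problem over $\mathcal{M}(\mathbb{R}^d)^N=\bigl(\mathcal{C}_0(\mathbb{R}^d)^N\bigr)'$, endowed with the norm $\|(w_n)_n\|=\sum_{n=1}^N\|w_n\|_{\mathcal{M}}$ and separable predual.

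Next I would settle existence and the topological statements. Each functional $(w_n)_n\mapsto\sum_n\langle w_n,\nu_{n,m}\rangle$ is weak*-continuous (pairing with fixed predual elements), so the data term is weak*-continuous, while the regularizer is weak*-lower-semicontinuous as a supremum of weak*-continuous functionals. Since the loss is bounded below and $\lambda>0$, minimizing sequences are norm-bounded, so Banach--Alaoglu (with metrizability of balls, using separability) and the generalized Weierstrass theorem produce a minimizer; the solution set $\mathcal{V}$ is convex, bounded, and weak*-closed, hence weak*-compact, and by Krein--Milman it is the weak*-closed convex hull of its (nonempty) extreme-point set. Then, invoking the strict convexity of $\mathrm{E}(\cdot,y_m)$, I would show that every element of $\mathcal{V}$ produces the same fitted values $z_m\eqdef\sum_n\langle w_n,\nu_{n,m}\rangle$ — a midpoint of two solutions with distinct values would strictly decrease the data term without increasing the regularizer — so $\mathcal{V}$ coincides with the solution set of
\[
\min_{(w_n)_n}\ \sum_{n=1}^N\|w_n\|_{\mathcal{M}}\quad\text{subject to}\quad\textstyle\sum_{n=1}^N\langle w_n,\nu_{n,m}\rangle=z_m,\quad m=1,\dots,M .
\]

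To extract the sparse form, I would apply the abstract representer theorem for minimizing a norm on a dual Banach space under $M$ weak*-continuous linear constraints — ultimately a consequence of Dubins' theorem on the extreme points of the intersection of a weak*-compact convex set with a finite-codimension affine subspace — which states that each extreme point of $\mathcal{V}$ is a linear combination of at most $M$ extreme points of the unit ball $B$ of $\bigl(\mathcal{M}(\mathbb{R}^d)^N,\sum_n\|\cdot\|_{\mathcal{M}}\bigr)$. Since this space is the $\ell_1$-sum of $N$ copies of $\mathcal{M}(\mathbb{R}^d)$, the extreme points of $B$ are exactly the tuples supported on one coordinate whose nonzero entry is an extreme point of the unit ball of $\mathcal{M}(\mathbb{R}^d)$, that is, a signed unit Dirac $\pm\delta(\cdot-\boldsymbol{z})$. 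Consequently an extreme point of $\mathcal{V}$ satisfies $w_n=\sum_{l=1}^{M_n}a_{n,l}\,\delta(\cdot-\boldsymbol{z}_{n,l})$ with $\sum_{n=1}^N M_n\le M$, and undoing the change of variables — a linear bijection (a weak*-homeomorphism), hence preserving extreme points — gives $f_n=\rho_{\mathrm{L}_n}*w_n=\sum_{l=1}^{M_n}a_{n,l}\,\mathrm{k}_n(\cdot,\boldsymbol{z}_{n,l})$, which is precisely \eqref{Eq:MKRVec} with $\|\boldsymbol{a}\|_{\ell_0}=\sum_{n=1}^N M_n\le M$.

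The hard part will be the extreme-point analysis of the last two steps: selecting and correctly applying the abstract representer theorem on the product space $\mathcal{M}(\mathbb{R}^d)^N$, and in particular verifying that the extreme points of its combined unit ball are exactly the single-coordinate signed Diracs, so that the $M$-atom bound on $\mathcal{V}$ translates into the total kernel count $\sum_n M_n\le M$. The change of variables and the weak*-topology bookkeeping of the first two steps should be comparatively routine once Theorems~\ref{Thm:NativeSpace} and~\ref{Thm:KernelAdmisOp} are available.
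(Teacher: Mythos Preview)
Your proposal is correct and follows the same three-step architecture as the paper: existence via coercivity and weak* lower-semicontinuity, reduction to a constrained interpolation problem using the strict convexity of $\mathrm{E}(\cdot,y)$, and a change of variables $w_n=\mathrm{L}_n\{f_n\}$ followed by an extreme-point analysis on $\mathcal{M}(\mathbb{R}^d)^N$. The only substantive difference is the tool used in the last step: the paper proves and applies a bespoke vector-valued Fisher--Jerome theorem (Theorem~\ref{GGF} in Appendix~C), established by a direct contradiction argument on disjoint Borel sets, whereas you invoke Dubins' theorem together with the identification of the extreme points of the unit ball of the $\ell_1$-sum $\bigl(\mathcal{M}(\mathbb{R}^d)^N,\sum_n\|\cdot\|_{\mathcal{M}}\bigr)$ as single-coordinate signed Diracs. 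Both routes are valid and deliver the same bound $\sum_{n=1}^N M_n\le M$; yours leans on an external abstract result and requires the (easy but essential) verification of the extreme-point structure of the $\ell_1$-sum ball, while the paper's argument is self-contained but longer. Your reordering (change of variables before existence) is cosmetic and causes no issues.
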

\begin{proof} 
Our proof is divided in three parts. First, we show the existence of a solution. Then, we show  that \eqref{Pb:MKR} is equivalent to a constrained interpolation problem with fixed function values and, from this equivalent form, we deduce the topological properties of the solution set. Finally, we derive the form \eqref{Eq:MKRVec} for the extreme points of the solution set.

Let us denote the data-fidelity and regularization terms of the cost functional  by $H(\cdot)$ and $R(\cdot)$, respectively, so that we have that
\begin{align}
&H(f_1,\ldots,f_N) = \sum_{m=1}^M \mathrm{E}({f}(\boldsymbol{x}_m),{y}_m),  \quad f=\sum_{n=1}^N f_n,\\
&R(f_1,\ldots,f_N) =  \sum_{n=1}^N \|\mathrm{L}_n\{{f}_n\}\|_\mathcal{M}.
\end{align}

\textbf{Part 1: Existence} We apply a standard technique in convex analysis. We show that the cost functional is coercive and weakly lower-semicontinuous  \cite{kurdila2006convex}. This also works when the latter property is replaced by   weak* lower-semicontinuity (see Proposition 8 in \cite{gupta2018continuous}).

The cost functional is a weighted sum of the nonnegative  data-fidelity term $H(f_1,\ldots,f_N)$ and the coercive regularization functional  $R(f_1,\ldots,f_N)$. This ensures its overall  coercivity. 

 The sampling operator is weak*-continuous by assumption. Its composition with a   continuous functional  $\mathrm{E}(\cdot,\boldsymbol{y})$ (that follows from its strict convexity) and summation over $m$ yields a cost functional ${H}({f})$ that is  weak* lower-semicontinuous as well. 

The gTV norms $\|{\rm L}_n\cdot\|_{\mathcal{M}}$ are weak* lower-semicontinuous on $\mathcal{M}_{{\rm L}_n}(\mathbb{R}^d)$.    This implies that the regularization functional is weak* lower-semicontinuous in the product space. Therefore, the overall cost functional ${H}({f_1,f_2,\ldots,f_N} ) + \lambda R(f_1,\ldots,f_N)$ is weak* lower-semicontinuous. Together with the coercivity of the cost functional, this proves the existence of a solution.

\textbf{Part 2: Equivalence to  the Constrained Problem} Considering two solutions $(f_{1,1},\ldots,f_{N,1})$ and $(f_{1,2},\ldots,f_{N,2})$ of the problem, we denote their reconstructing functions by $f_i = \sum_{n=1}^N f_{n,i}$ for $i=1,2$. By contradiction assume that  $f_1(\boldsymbol{x}_m)\neq f_2(\boldsymbol{x}_m)$ for some $m$. Since  $\mathrm{E}(\cdot,y)$ is  a strictly convex function for any $y\in\mathbb{R}$,    we have that
\begin{equation}\label{Ineq:CnvxH}
H(\frac{f_{1,1}+f_{1,2}}{2},\ldots,\frac{f_{N,1}+f_{N,2}}{2}) < \frac{{H}(f_{1,1},\ldots,f_{N,1})+{H}(f_{1,2},\ldots,f_{N,2})}{2}. 
\end{equation}
Similarly, the convexity of $R(\cdot)$ implies the inequality 
\begin{equation}\label{Ineq:CnvxR}
R(\frac{f_{1,1}+f_{1,2}}{2},\ldots,\frac{f_{N,1}+f_{N,2}}{2}) \leq \frac{{R}(f_{1,1},\ldots,f_{N,1})+{R}(f_{1,2},\ldots,f_{N,2})}{2}. 
\end{equation}  
Together, the inequalities \eqref{Ineq:CnvxH} and \eqref{Ineq:CnvxR}   imply that $(\frac{f_{1,1}+f_{1,2}}{2},\ldots,\frac{f_{N,1}+f_{N,2}}{2})$ has a smaller cost than $(f_{1,i},\ldots,f_{N,i})$ for $i=1,2$, which contradicts  their optimality.  Hence,  $ {f}_1(\boldsymbol{x}_m)={f}_2(\boldsymbol{x}_m)={z}_m$ for $m=1,2,\ldots,M$ and one can rewrite the problem as 
\begin{equation}\label{MKRCnst}
\min_{\stackrel{ f_n\in\mathcal{M}_{\mathrm{L}_n}(\mathbb{R}^d),}{ f=\sum_{n=1}^N f_n} }   \sum_{n=1}^N \|\mathrm{L}_n\{{f}_n\}\|_\mathcal{M}, \quad \text{s.t.} \quad  {f}(\boldsymbol{x}_m)={z}_m, \quad m=1,2,\ldots,M.
\end{equation}

\textbf{Part 3: Identifying the Solution Set} Let us define $w_n = \mathrm{L}_n\{f_n\}$ for $n=1,\ldots,N$ and $\nu_m(w_1,\ldots,w_N)=   \sum_{n=1}^N \langle \delta(\cdot-\boldsymbol{x}_m) , {\rm L}_n^{-1}\{w_n\}\rangle = \sum_{n=1}^N f_n(\boldsymbol{x}_m)$ for $m=1,\ldots,M$. We then reformulate \eqref{MKRCnst} as
\begin{equation}\label{VectorFisherJerome}
\min_{ w_1,\ldots,w_N\in\mathcal{M} } \sum_{n=1}^N \|w_n\|_\mathcal{M}, \quad \text{s.t.} \quad  \nu_m(w_1,\ldots,w_N) = z_m , \quad m=1,2,\ldots,M. 
\end{equation}
Now, using the vector-valued Fisher-Jerome theorem (Appendix C), we deduce that the solution set of \eqref{VectorFisherJerome} is convex and weak*-compact   with the extreme points of the form 
$\mathbf{w}=(w_1,\ldots,w_N)$, where $w_n $ takes the form 
\begin{equation}
w_n = \sum_{l=1}^{M_n} a_{n,l} \delta(\cdot-\boldsymbol{z}_{n,l}), 
\end{equation}
for some $a_{n,l}\in\mathbb{R}$ and $\boldsymbol{z}_{n,l}\in\mathbb{R}^d$. Moreover, the total number of Diracs in   $\mathbf{w}$ is upper-bounded by $M$. This implies that the solution set of  \eqref{MKRCnst} (and, consequently, the one of \eqref{Pb:MKR}) is a convex and weak*-compact set  due to the linearity and isomorphism of $\mathrm{L}_n$. Correspondingly, the extreme points of the original problem \eqref{Pb:MKR} take  the form of $(f_1,f_2,\ldots,f_N)$, where $f_n = \mathrm{L}_n^{-1}\{w_n\}$ has a kernel expansion with $M_n$ kernels at adaptive positions subject to the constraint  $\sum_{n=1}^N M_n \leq M$. 
\end{proof}

 The practical outcome of Theorem \ref{Thm:Main} is that any extreme point of \eqref{Pb:MKR} maps into a solution of the form 

 \begin{equation}\label{Eq:MKRSol}
f(\cdot) = \sum_{n=1}^{N}\sum_{l=1}^{M_n}  a_{n,l} \mathrm{k}_n(\cdot,\boldsymbol{z}_{n,l})
 \end{equation}
 for the learned function.  The solution form \eqref{Eq:MKRSol} has the following important properties:
\begin{itemize}
\item The number of active kernels is upper-bounded by the number of samples $M$.  This justifies the use of multiple kernels since the flexibility of the model will be increased while the problem  remains well-posed. 
\item The gTV norm enforces   an $\ell_1$ penalty on the kernel coefficients.  Practically, this will result in an $\ell_1$-minimization problem that is reminiscent of   the generalized LASSO.  
\item The kernel positions are adaptive and will be chosen such that the solution becomes sparse.  In other words, the adaptiveness of the kernel positions, together with the $\ell_1$ regularization on the kernel coefficients, favors solutions with a small number of nonzero terms in   the expansion  \eqref{Pb:MKR}.
\end{itemize}

To conclude this section, let us mention that the existence of the kernel locations $\mathbf{z}_{n,l}$ in \eqref{Eq:MKRSol} is guaranteed  by our representer theorem. However, unlike in RKHS methods, these locations do not necessarily coincide with the data points. The adaptiveness comes from the fact that the kernel positions become part of the reduced finite-dimensional optimization problem (see \eqref{Pb:gTVdiscreteLASSO} for the single-kernel scenario). Hence, an optimization scheme is required in order to ``learn'' these unknown parameters along with the kernel weights.

\section{Discussion and Illustration} 
 In this section, we provide some further discussions together with a  numerical example that illustrates  important aspects of our framework. 
\subsection{Optimization Scheme}  
Finding the kernel positions in general can be very challenging. Once the positions are fixed, one can find the kernel weights efficiently using  classical $\ell_1$-minimization techniques  \cite{beck2009fast,daubechies2010iteratively,figueiredo2007gradient}.  In low dimensions, one can use grid-based algorithms and reach  a solution where the positions of the kernels are quantified   \cite{debarre2019b,gupta2018continuous}. It is also possible to adapt the algorithms developed for finding Dirac locations in  super resolution in order to find the kernel positions  \cite{bredies2013inverse,candes2013super,denoyelle2019sliding,fernandez2016super}. However, for high-dimensional problems, this is an open numerical challenge and requires further considerations. A  possible avenue of research would be to use  first-order primal-dual  splitting methods for convex-nonconvex problems  \cite{valkonen2019first} and take advantage of the convexity of the problem with respect to the kernel weights.

\subsection{Numerical Example}\label{Sec:Numerical}
   In this section, we provide a numerical example in the case $d=1$. We would like to emphasize  that the computational aspects of our framework ({\it e.g.}, the derivation of  efficient algorithms in high dimensions) is left to future works. The sole purpose of our example is to illustrate the use of  Theorem \ref{Thm:Main} and highlight two important features, namely, adaptivity and sparsity.    

In our example, we compare the performance of five kernel estimators: 
\begin{enumerate}
\item {\bf RKHS $\bf L_2$:} RKHS regularization  \eqref{Pb:DiscreteL2}.  
\item {\bf RKHS $\bf L_1$:}  Generalized LASSO \eqref{Pb:GeneralizedLASSO}.  
\item {\bf SimpleMKL:} Multiple-kernel learning (MKL) using the SimpleMKL algorithm \cite{rakotomamonjy2008simplemkl}.
\item {\bf Single gTV:} Single-kernel   gTV regularized learning \eqref{Pb:gTVdiscreteLASSO}.
\item {\bf Multi gTV:} Learning with multiple kernels and gTV regularization \eqref{Pb:MKR}.
\end{enumerate}

To avoid the difficulty of optimizing over the data centers in the gTV-based methods,  which would result in a nonconvex problem, we use a convex proxy in which a redundant set of centers is placed on a grid and  the excess ones  are suppressed with the help of $\ell_1$-minimization. With this grid-based approach,  the search for the kernel positions is reduced to   a large-scale $\ell_1$-minimization problem for which robust algorithms  are known to exist---specifically, we  have used FISTA \cite{beck2009fast} in our example. This scheme will obviously only work when the input dimension is very low, such as $d=1$ in the present example.

 We consider the reconstruction of a function from its noisy samples in two scenarios: full data {\it versus} missing data.  The results are depicted in Figure \ref{CompareNoisy}, while we refer to Appendix \ref{App:Detail} for the full implementation details.  As we can see in Figure \ref{SubFig:Fulldata}, due to the presence of a non-smooth region in the target function, the single-kernel methods are forced to use narrow kernels with a small width which creates undesirable oscillations in the smoother regions. By contrast, our multi-kernel scheme uses both narrow and wide kernels,  hence  providing the reconstruction with the least fluctuation. In the presence of missing data, we observe in Figure \ref{SubFig:MissData} that the reconstructed function of RKHS-based methods, exhibit an undesirable dip. This is due to the fact that, in the RKHS-based methods, the kernel functions are located on the data points and their width is too short to fill the gap in the data. By contrast, the kernel locations are adaptive in our scheme, which yields a decent reconstruction in this case as well. Finally, we have plotted the 100 largest kernel coefficients  of each expansion in the full-data experiment in Figure \ref{CoeffNoisy}. This plot highlights that the gTV-based methods are providing the sparsest representation for the target function, as expected.

 
The above visual observations are also supported quantitatively in Table 1, where we report the mean-squared error (MSE) error and   sparsity   (number of coefficients that are larger than one tenth of the maximum coefficient)  of each method in the two scenarios.
%

 \begin{figure}[t]
  \begin{subfigure}{.49\textwidth}
  \centering
  \centerline{\includegraphics[width=\linewidth]{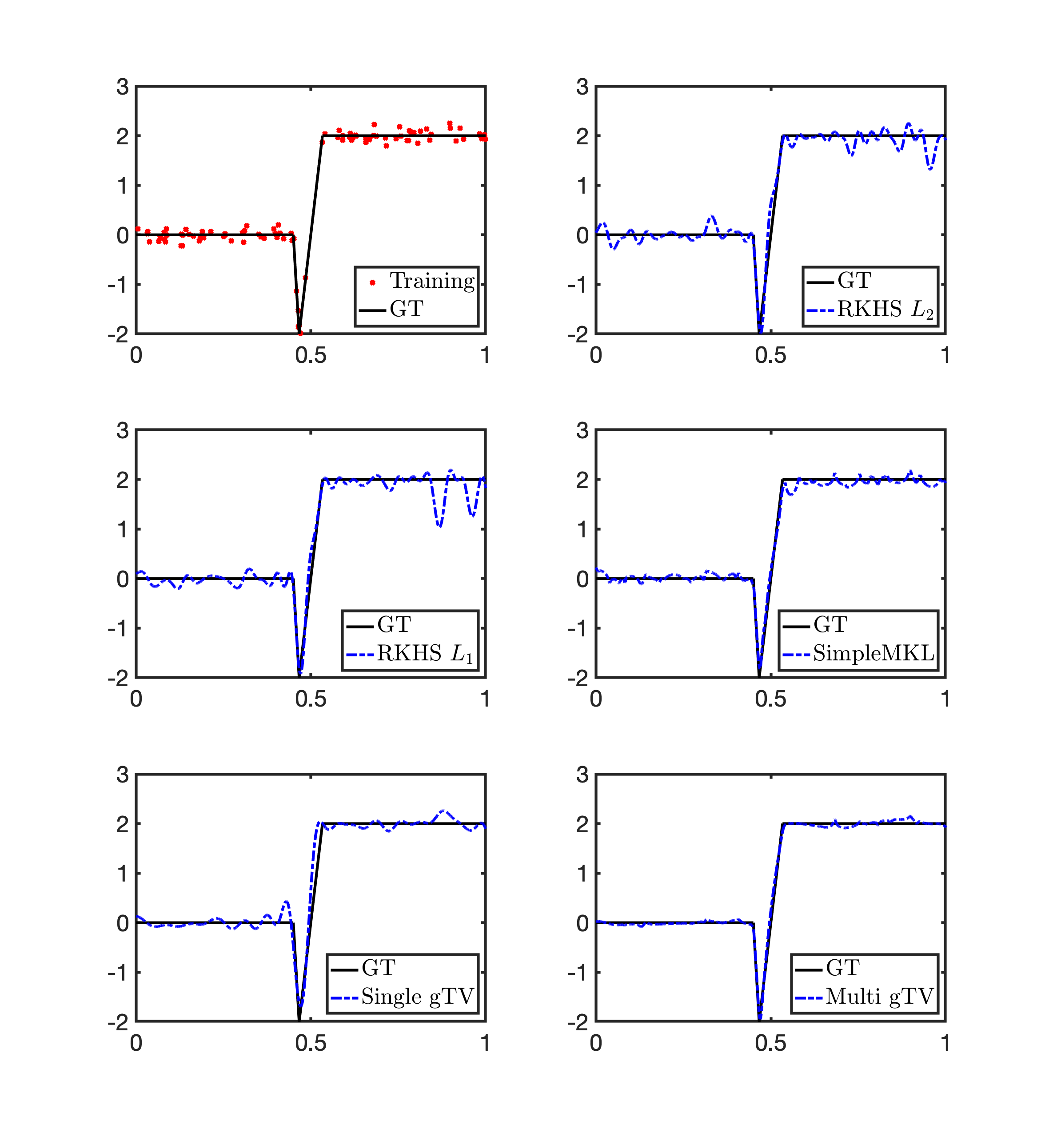}}
  \caption{Full data}\label{SubFig:Fulldata}
\end{subfigure}
  \begin{subfigure}{.49\textwidth}
  \centering
  \centerline{\includegraphics[width=\linewidth]{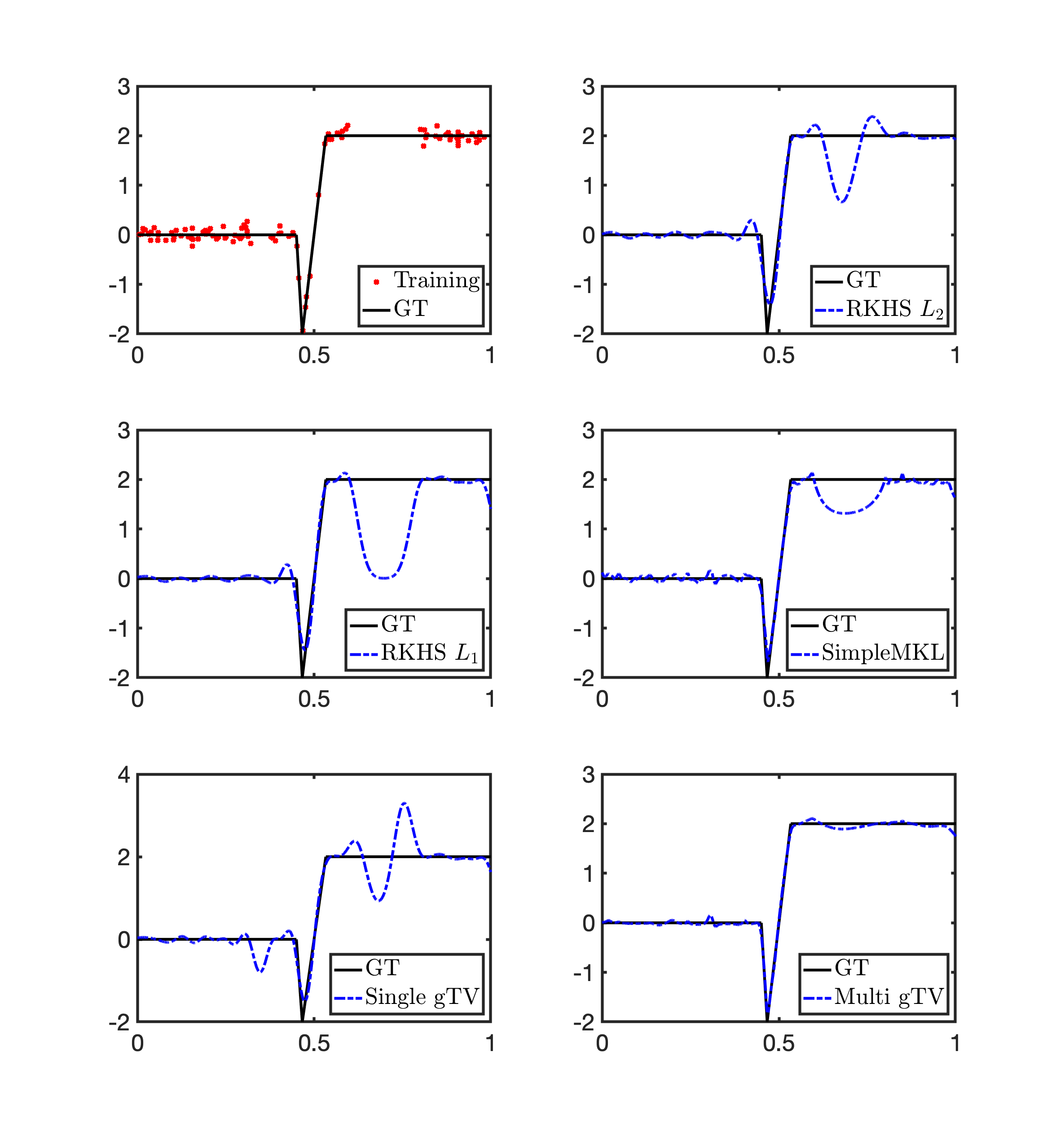}}
  \caption{Missing data}\label{SubFig:MissData}
\end{subfigure}

  \caption{ Performance of  the kernel estimators in two scenarios: full data (Left) or missing data (Right). Solid line: ground-truth (GT) function. Dash-dotted line: reconstructed functions. Dots: noisy data points.}\label{CompareNoisy} \medskip
\end{figure}
%
 
  \begin{figure}[t]
\begin{minipage}{1.0\linewidth}
  \centering
  \centerline{\includegraphics[width=.7\linewidth]{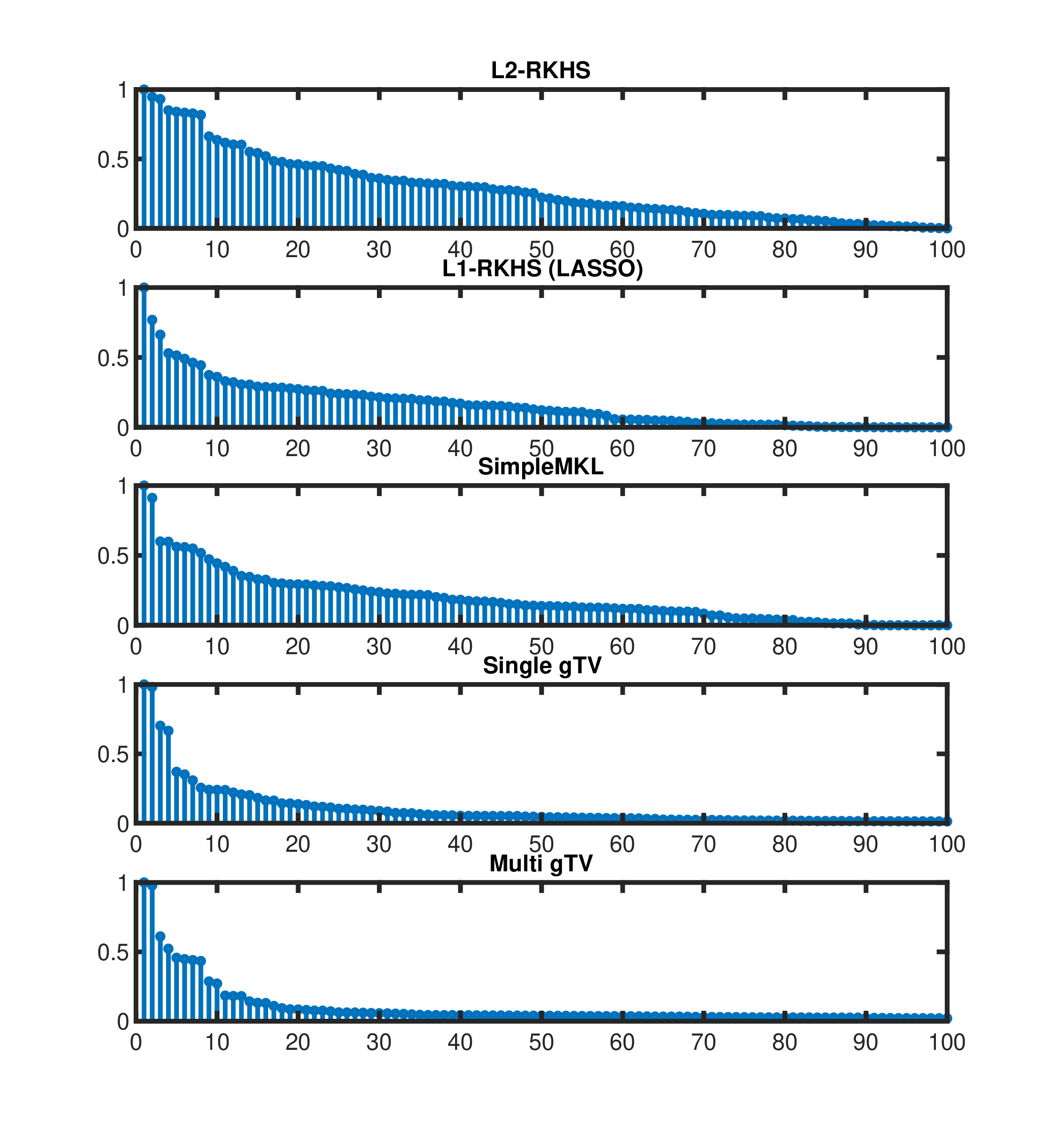}}
  \caption{ 100 largest coefficients of each expansion in the full-data  case.}\label{CoeffNoisy} \medskip
\end{minipage}
\end{figure}

\begin{table}[!t]
	\renewcommand{\arraystretch}{1.3}
	\centering
	\caption{MSE and sparsity of the  kernel estimators. The results are averaged over 10 runs. }
	\begin{scriptsize}
		\begin{tabular}{c c| c c c c c}
			\hline
			\hline
			 Quantity &Dataset&  L2-RKHS& L1-RKHS & SimpleMKL & Single gTV & Multi gTV \\
		 	\hline
			\multirow{2}{*}{Sparsity} & Full data & 64.7  &  44.1 & 54.4  & 32.5 & {\bf 20.0 }\\			 
			& Missing data & 66.1 & 39.3 & 56.0 & 32.9 & {\bf 31.1} \\
			\hline 
			\multirow{2}{*}{MSE (dB)}  & Full data &  -17.2  &    -16.1 &  -15.2 &   -16.7 & {\bf  -18.1 }\\			 
			& Missing data& -2.6  &  -2.7  & -10.9  &  -3.9  & {\bf -17.3 } \\
			\hline
			\hline 
		\end{tabular}
	\end{scriptsize}
\end{table}
 
\subsection{Uniqueness of the Solution}
An interesting question is to explore the cases where perfect recovery is theoretically guaranteed. This is an open research topic on its own for which a rich literature exists  \cite{duval2015exact,denoyelle2017support,de2012exact}. Nevertheless, we analyze in Proposition \ref{Prop:Uniqueness}, a very simple scenario  for which we can prove uniqueness. 
\begin{proposition}\label{Prop:Uniqueness}
Let ${\rm k}_1,\ldots,{\rm k}_N$  be a collection of $N$ symmetric admissible  kernels that are normalized so that  ${\rm k}_n(\boldsymbol{x},\boldsymbol{x})= 1$ for $n=1,\ldots,N$ and $\boldsymbol{x}\in\mathbb{R}^d$. Consider the  minimization 
\begin{equation}\label{Eq:example}
\min_{\stackrel{ f_n\in\mathcal{M}_{\mathrm{L}_n}(\mathbb{R}^d),}{ f=\sum_{n=1}^N f_n} }   \sum_{n=1}^N \|\mathrm{L}_n\{{f}_n\}\|_\mathcal{M}, \quad \text{s.t.} \quad  {f}(\boldsymbol{x}_m)=f_0(\boldsymbol{x}_m), \quad m=1,\ldots,M,
\end{equation}
where $f_0(\cdot)= a_0 {\rm k}_{n_0} (\cdot,\boldsymbol{z}_0)$ for some $n_0\in\{1,\ldots,N\}$, $a_0\in\mathbb{R}$, and $\boldsymbol{z}_0\in\mathbb{R}^d$. Assume that the set of data points $\{\boldsymbol{x}_1,\ldots,\boldsymbol{x}_M\}$ contains $\boldsymbol{z}_0$ and is such that the $M$ by $N$ matrix ${\bf K} = [{\rm k}_n(\boldsymbol{x}_m,\boldsymbol{z}_0)]$ has full column rank. Then, $f_0$ is the unique solution of \eqref{Eq:example}. 
\end{proposition}
\begin{proof}
From the proof of Theorem \ref{Thm:Main}, we know that the solution set of \eqref{Eq:example} is the convex hull of functions of the form \eqref{Eq:MKRSol}. We now show that the solution set has only one extreme point  (that is $f_0$), which is equivalent to the solution being unique. 

Let $f$ be an extreme point of the solution set of \eqref{Eq:example} whose form is given in \eqref{Eq:MKRSol}.  Since $\boldsymbol{z}_0$ is among the data points, we deduce that 
$$a_0 = a_0 {\rm k}_{n_0} (\boldsymbol{z}_0,\boldsymbol{z}_0) = f_0 (\boldsymbol{z}_0)= f(\boldsymbol{z}_0) = \sum_{n=1}^{N}\sum_{l=1}^{M_n}  a_{n,l} \mathrm{k}_n(\boldsymbol{z}_0,\boldsymbol{z}_{n,l}).$$
Hence, by using the triangle inequality, we obtain that 
$$ |a_0| = \left| \sum_{n=1}^{N}\sum_{l=1}^{M_n}  a_{n,l} \mathrm{k}_n(\boldsymbol{z}_0,\boldsymbol{z}_{n,l})\right| \leq \sum_{n=1}^{N}\sum_{l=1}^{M_n}  |a_{n,l}| |\mathrm{k}_n(\boldsymbol{z}_0,\boldsymbol{z}_{n,l})|\leq \sum_{n=1}^{N}\sum_{l=1}^{M_n}  |a_{n,l}|,$$
where the last inequality comes from the fact that, for any positive-definite kernel ${\rm k}$, we have that ${\rm k}(\boldsymbol{x},\boldsymbol{y})^2 \leq {\rm k}(\boldsymbol{x},\boldsymbol{x}){\rm k}(\boldsymbol{y},\boldsymbol{y}) = 1$. Note that the positive-definiteness here is guaranteed by Corollary \ref{Corollary}. This Cauchy-Schwartz-type inequality is saturated if and only if $\boldsymbol{x}=\boldsymbol{y}$.  Together with the optimality of   $f$, we deduce that  $\boldsymbol{z}_0=\boldsymbol{z}_{n,l} $ for all $n=1,\ldots, N$ and $l=1,\ldots,M_n$. Hence, we can rewrite the constraints  as 
$$ \sum_{n=1}^N \tilde{a}_n {\rm k}_n(\boldsymbol{x}_m,\boldsymbol{z}_0)= a_0 {\rm k}_{n_0} (\boldsymbol{x}_m,\boldsymbol{z}_0) , \quad m=1,\ldots,M,$$
where $\tilde{a}_n = \sum_{l=1}^{M_n} a_{n,l}$. In matrix form, this becomes ${\bf K} \tilde{\boldsymbol{a}} = {\bf K} a_0 \mathbf{e}_{n_0}$, where $\mathbf{e}_{n_0} \in\mathbb{R}^N$ is the $n_0$th element of the canonical basis of $\mathbb{R}^N$. Finally, by using the full column rank assumption, we deduce that $\tilde{\boldsymbol{a}} = a_0 \mathbf{e}_{n_0}$, which completes the proof.
\end{proof}

\section{Conclusion}
In this paper, we have provided a theoretical foundation for  multiple-kernel regression with gTV regularization.  We  have studied the Banach structure of our search space and     identified the class of  kernel functions that are admissible. Then, we have derived a representer theorem that shows that the    learned function  can be written as a linear combination of   kernels   with adaptive centers.   Our representer theorem also provides an upper bound to the number of active elements, which allows us to use as many kernels as convenient.  We have illustrated numerically the effect of using multiple kernels with a sparsity constraint. Further research directions could be the development of efficient methods in high dimensions to approximate the kernel positions and an extension of the current theory to make Gaussian kernels admissible. 
 
\appendix \section{Proof of Theorem \ref{Thm:NativeSpace} }
\begin{proof}

\ref{It:MLBanach} The linearity and invertibility of $\mathrm{L}$   implies that the native space together with the gTV norm is a {\it bona fide} Banach space. 

\ref{It:MLM} The restriction of $\mathrm{L}$ over its native space is injective (inherited from $\mathrm{L}$) and is continuous  due to the definition of the  gTV norm. For all ${w}\in\mathcal{M}(\mathbb{R}^d )$, the relation $\mathrm{L}\{ \mathrm{L}^{-1}\{{w}\} \} = {w}$ implies that it is surjective as well and that its inverse is the restriction of $\mathrm{L}^{-1}$ over $\mathcal{M}(\mathbb{R}^d )$ which continuously maps $\mathcal{M}(\mathbb{R}^d)\rightarrow\mathcal{M}_{\mathrm{L}}(\mathbb{R}^d)$. This ensures that $\mathrm{L}:\mathcal{M}_{\mathrm{L}}(\mathbb{R}^d) \rightarrow \mathcal{M}(\mathbb{R}^d)$ is an isomorphism.

\ref{It:Adjoint} The isomorphism of Part \ref{It:MLM} implies the existence of the adjoint operator over $\left(\mathcal{M}(\mathbb{R}^d )\right)' $. By restricting the adjoint operator to   $\mathcal{C}_0(\mathbb{R}^d )$, we obtain the  operator $\mathrm{L}^*:\mathcal{C}_0(\mathbb{R}^d) \rightarrow \mathcal{C}_{\mathrm{L}}(\mathbb{R}^d ) $, where  the space $ \mathcal{C}_{\mathrm{L}}(\mathbb{R}^d )$ is the image of $\mathrm{L}^*$ over $\mathcal{C}_0(\mathbb{R}^d )$.   This space,   equipped with the norm  $\|f\|_{\mathcal{C}_\mathrm{L}}\eqdef\|\mathrm{L}^{-1*}\{f\}\|_\infty$, is a Banach space due to the  linearity and invertibility of $\mathrm{L}^{-1*}$. 

\ref{It:Predual} Similarly to Part \ref{It:MLM}, we readily  verify that  the adjoint operator $\mathrm{L}^*:\mathcal{C}_0(\mathbb{R}^d)\rightarrow\mathcal{C}_{\mathrm{L}}(\mathbb{R}^d)$ is indeed an isomorphism. Therefore, the double-adjoint operator is the isomorphism $\mathrm{L}^{**}: (\mathcal{C}_\mathrm{L}(\mathbb{R}^d ))' \rightarrow  \mathcal{M} (\mathbb{R}^d )$. Consequently, the domains of $\mathrm{L}$ and $\mathrm{L}^{**}$ must be equal, which implies that  $\mathcal{C}_\mathrm{L}(\mathbb{R}^d )$ is the predual of the native space.

\ref{It:Embedding} First, we  show that the operator $\mathrm{L}$ is closed over the space of Schwartz functions.  It is known that the impulse response of $\mathrm{L}^*:\mathcal{S}\rightarrow\mathcal{S}$ is the flipped version of the one of $\mathrm{L}$ \cite{unser2014introduction}. In other words, the application of $\mathrm{L}^*$ on a Schwartz function can be expressed by  
\begin{equation}\label{SchwartzThmAdjoint}
\forall \varphi \in \mathcal{S}(\mathbb{R}^d):  \mathrm{L}^*\{\varphi \}(\cdot)= \int_{\mathbb{R}^d}   h(\boldsymbol{x}-\cdot)\varphi (\boldsymbol{x}) \mathrm{d}\boldsymbol{x},
\end{equation}
where $h\in\mathcal{S}'(\mathbb{R}^d)$ is the impulse response of $\mathrm{L}$, described  in 
\eqref{SchwartzKernelTheorem}. By the change of variable $\boldsymbol{y}=(-\boldsymbol{x})$, one  verifies that,  for any $\varphi\in \mathcal{S}(\mathbb{R}^d)$, we have that 
\begin{equation}\label{Eq:LLstar}
\mathrm{L}\{\varphi \} = \mathrm{L}^*\{ \varphi^{\vee}\}^\vee,
\end{equation}
where $\varphi^\vee$ is the flipped version of $\varphi\in\mathcal{S}(\mathbb{R}^d)$ with $\varphi^\vee(\boldsymbol{x}) = \varphi(-\boldsymbol{x})$ for all $\boldsymbol{x}\in\mathbb{R}^d$. In effect, \eqref{Eq:LLstar} shows that  $\mathrm{L}\{ \varphi\} \in\mathcal{S}(\mathbb{R}^d)$ for any $\varphi\in \mathcal{S}(\mathbb{R}^d)$. 

Now, from the inclusions $\mathrm{L}\{\mathcal{S}(\mathbb{R}^d)\} \subseteq \mathcal{S}(\mathbb{R}^d)$ and  $\mathcal{S}(\mathbb{R}^d) \subseteq \mathcal{M}(\mathbb{R}^d)$, we deduce that  $\mathcal{S}(\mathbb{R}^d )  \subseteq \mathcal{M}_{\mathrm{L}}(\mathbb{R}^d ) $.   Moreover, $\mathcal{M}_{\mathrm{L}}(\mathbb{R}^d ) \subseteq \mathcal{S}'(\mathbb{R}^d)$  by Definition \ref{Def:NativeSpace}. This verifies the inclusion $\mathcal{S}(\mathbb{R}^d) \subseteq \mathcal{M}(\mathbb{R}^d) \subseteq \mathcal{S}'(\mathbb{R}^d)$. To complete the proof, we need to show that the identity operators $id_1:\mathcal{S}(\mathbb{R}^d) \rightarrow \mathcal{M}(\mathbb{R}^d)$ and $id_2: \mathcal{M}(\mathbb{R}^d) \rightarrow \mathcal{S}'(\mathbb{R}^d)$ are continuous.

For a converging sequence of Schwartz functions $\varphi_n\stackrel{\mathcal{S}}{ \rightarrow} \varphi$, the continuity of $\mathrm{L}$ implies that $\mathrm{L}\{\varphi_n\}\stackrel{\mathcal{S}}{ \rightarrow} \mathrm{L}\{\varphi\}$. Since  $\mathcal{S}(\mathbb{R}^d)$ is continuously embedded in $\mathcal{M}(\mathbb{R}^d)$, we have that $\mathrm{L}\{\varphi_n\}\stackrel{\mathcal{M}}{ \rightarrow}\mathrm{L}\{\varphi\}$ and, consequently, that  $\varphi_n \stackrel{\mathcal{M}_{\mathrm{L}}}{ \rightarrow} \varphi$. This proves that the embedding is continuous, which is denoted by $\mathcal{S}(\mathbb{R}^d )  {\hookrightarrow} \mathcal{M}_{\mathrm{L}}(\mathbb{R}^d )$. Moreover, since the space $\mathcal{M}(\mathbb{R}^d)$ is continuously embedded in $\mathcal{S}'(\mathbb{R}^d)$, the convergence $\mathrm{L}\{\varphi_n\}\stackrel{\mathcal{M}}{ \rightarrow}\mathrm{L}\{\varphi\}$ implies that $\mathrm{L}\{\varphi_n\}\stackrel{\mathcal{S}'}{ \rightarrow}\mathrm{L}\{\varphi\}$.   This proves that    $\mathcal{M}_{\mathrm{L}}(\mathbb{R}^d ) \stackrel{d.}{\hookrightarrow}  \mathcal{S}'(\mathbb{R}^d )$. The latter continuous embedding  is also dense due to the denseness of $\mathcal{S}(\mathbb{R}^d)$ in  $\mathcal{S}'(\mathbb{R}^d)$ and  the inclusion  $\mathcal{S}(\mathbb{R}^d) \subseteq \mathcal{M}_{\mathrm{L}}(\mathbb{R}^d )$.
\end{proof}

 \section{Proof of Theorem \ref{Thm:KernelAdmisOp}}
\begin{proof} 
Assume that $\mathrm{L}$ is a kernel-admissible operator. The weak*-continuity of the sampling functional  implies that the shifted Dirac impulses $\delta(\cdot-\boldsymbol{x}_0)$ should be included in the predual space $\mathcal{C}_{\mathrm{L}}(\mathbb{R}^d)$. Therefore, $\mathrm{L}^{-1*}\{\delta(\cdot-\boldsymbol{x}_0)\}$ should be   in $\mathcal{C}_0(\mathbb{R}^d)$. Since, the Green's functions of $\mathrm{L}$ and $\mathrm{L}^*$ are flipped version of each other, we deduce that $\rho_{\mathrm{L}}= \mathrm{L}^{-1}\{\delta(\cdot-\boldsymbol{x}_0)\}\in \mathcal{C}_0(\mathbb{R}^d)$.  For the second property, we recall that the continuity of $\mathrm{L}^{-1}:\mathcal{S}'(\mathbb{R}^d)\rightarrow\mathcal{S}'(\mathbb{R}^d)$ implies the smoothness and slow growth of  the Fourier transform of its frequency response. Hence, $\widehat{\rho_\mathrm{L}}(\omega)$ is smooth and slowly growing.  Similarly, the continuity of $\mathrm{L}$ implies that  $\frac{1}{\widehat{\rho_\mathrm{L}}(\omega)}$ is   a smooth and slowly growing function as well. Thus,  $\widehat{\rho_\mathrm{L}}(\omega)$  is non-vanishing and heavy-tailed. 

For the converse, assume that the   function $\rho$ satisfies Properties \ref{It:kernelcont}  and \ref{It:Slowgrowth} in Theorem \ref{Thm:KernelAdmisOp}.  First, note that, if $f,g:\mathbb{R}^d\rightarrow\mathbb{R}$ are smooth and slowly growing functions and, moreover, $g$ is nonzero and heavy-tailed, then
\begin{equation}
 { \partial  \over \partial x_{i}} \left(\frac{f}{g} \right)=\frac{ {\partial f \over \partial x_{i}}   g  -  {\partial g  \over \partial x_{i} }  f }{g ^2}
\end{equation}
is a quotient whose  numerator is  a smooth and slowly growing function and whose denominator $g^2$ is a  nonzero, heavy-tailed, smooth, and slowly growing function. Hence, the quotient itself is a smooth function whose  growth  is bounded  by a polynomial. Based on this observation, one can deduce from induction that all the arbitrary-order   derivatives of  $ \frac{1}{\widehat{\rho}(\boldsymbol{\omega})}$ can be expressed by a quotient with a slowly growing nominator and a heavy-tailed denominator. This shows that  $ \frac{1}{\widehat{\rho}(\boldsymbol{\omega})}$ is a smooth and  slowly growing function as well. These properties ensure the existence of   continuous LSI operators $\
\mathrm{L},\tilde{\mathrm{L}}:\mathcal{S}'(\mathbb{R}^d)\rightarrow \mathcal{S}'(\mathbb{R}^d)$ with 
the frequency responses $\frac{1}{\widehat{\rho}(\omega)}$ and $ \widehat{\rho}(\omega)$, 
respectively. The one-to-one correspondence between an operator and its frequency response then 
yields that $\tilde{\mathrm{L}} = \mathrm{L}^{-1}$, from which we conclude that $\mathrm{L}$ is an isomorphism 
over $\mathcal{S}'(\mathbb{R}^d)$. Moreover, due to  Property \ref{It:kernelcont}, we know that the Green's function of $\mathrm{L}$  is in $\mathcal{C}_0(\mathbb{R}^d)$. Hence, the Green's function of $\mathrm{L}^*$ is also in $\mathcal{C}_0(\mathbb{R}^d)$ so that,   for any $\boldsymbol{x}_0\in\mathbb{R}^d$, we have that 
\begin{equation}
\mathrm{L}^{-1*}\{\delta(\cdot-\boldsymbol{x}_0)\} = \mathrm{L}^{-1*}\{\delta\}(\cdot-\boldsymbol{x}_0) \in \mathcal{C}_0(\mathbb{R}^d).
\end{equation}  
In other words, $\delta(\cdot-\boldsymbol{x}_0) \in \mathrm{L}^*(\mathcal{C}_0(\mathbb{R}^d)) = \mathcal{C}_{\mathrm{L}}(\mathbb{R}^d)$, which shows that  the sampling functionals are weak*-continuous. 
\end{proof}

 \section{Vector-Valued Fisher-Jerome Theorem} 
 
Here, we propose and prove a generalization of the Fisher-Jerome theorem \cite{fisher1975spline} for a vector of bounded Radon measures.  The result is not deducible from the original theorem, but its proof is an adaptation of the scalar case (Theorem 7 in \cite{unser2017splines}). We denote the space of bounded Radon vector measures $(w_1,\ldots,w_N)$ by $\mathcal{M}(\mathbb{R}^d;\mathbb{R}^N)$, where each component $w_n \in \mathcal{M}(\mathbb{R}^d)$ is a bounded Radon measure. The total-variation norm of the vector $\mathbf{w}= (w_1,\ldots,w_N) \in \mathcal{M}(\mathbb{R}^d;\mathbb{R}^N)$ is defined by $\|\mathbf{w}\|_{\mathcal{M}}= \sum_{n=1}^N \|w_n\|_{\mathcal{M}}$.
\begin{theorem}[Vector-valued Fisher-Jerome]\label{GGF}
Let $\mathcal{B}= \mathcal{M}(\mathbb{R}^d;\mathbb{R}^N) \bigoplus \mathcal{N}$, where $\mathcal{N}$ is an $N_0$-dimensional normed space and assume that $\mathrm{F}:\mathcal{B}\rightarrow \mathbb{R}^M$ is a linear and weak*-continuous functional ($M\geq N_0$)   such that 
\begin{equation}\label{condGGF}
\exists B>0: \quad \forall \mathbf{p} \in \mathcal{N}\backslash \{\boldsymbol{0}\}, \quad B \leq \frac{\|\mathrm{F}(0,\mathbf{p})\|_2}{\|\mathbf{p}\|_\mathcal{N}}
\end{equation}
and that the minimization problem 
\begin{equation}\label{probGGF}
\mathcal{V}=\argmin_{ (\mathbf{w},\mathbf{p})\in\mathcal{B}}  \|\mathbf{w} \|_\mathcal{M}  \quad s.t.\quad \mathrm{F}(\mathbf{w},\mathbf{p}) \in \mathcal{C} 
\end{equation}
is  feasible  for a convex and compact set $\mathcal{C}\subseteq \mathbb{R}^M$.  Then, $\mathcal{V}$ is a   nonempty, convex, weak*-compact subset of $\mathcal{B}$ while the components of its  extreme points  $(w_1,w_2,\ldots,w_{N},\mathbf{p})$ are all of the form  
\begin{equation}\label{epform}
w_n = \sum_{l=1}^{M_n} a_{n,l} \delta(\cdot-\boldsymbol{z}_{n,l}), \quad n=1,2,\ldots,N,
\end{equation}
where $a_{n,l} \in \mathbb{R}$ and $\boldsymbol{z}_{n,l} \in \mathbb{R}^d $. Moreover, $\sum_{n=1}^{N} M_n \leq M$ and the minimum $\mathcal{M}$-norm obtained for the problem is equal to $ \sum_{n=1}^{N} \sum_{l=1}^{M_n} |a_{n,l}|$.
\end{theorem}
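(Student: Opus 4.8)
The plan is to transcribe the proof of the scalar representer theorem (Theorem~7 in \cite{unser2017splines}, which itself rests on \cite{fisher1975spline}) to the vector-measure setting, dragging the finite-dimensional slot $\mathbf{p}\in\mathcal{N}$ along throughout. I would first record the relevant Banach structure: $\mathcal{M}(\mathbb{R}^d;\mathbb{R}^N)$ with the norm $\mathbf{w}\mapsto\sum_n\|w_n\|_{\mathcal{M}}$ is the continuous dual of $\mathcal{C}_0(\mathbb{R}^d;\mathbb{R}^N)$ equipped with $\boldsymbol{\varphi}\mapsto\max_n\|\varphi_n\|_\infty$, and $\mathcal{N}$ is reflexive; hence $\mathcal{B}$ is a dual space, its closed balls are weak*-compact by Banach--Alaoglu, and the $\mathcal{M}$-part is weak*-metrizable on bounded sets because $\mathcal{C}_0(\mathbb{R}^d;\mathbb{R}^N)$ is separable. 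Weak*-continuity of $\mathrm{F}$ lets me split $\mathrm{F}(\mathbf{w},\mathbf{p})=\mathrm{F}_1(\mathbf{w})+\mathrm{F}_2(\mathbf{p})$, where $\mathrm{F}_1$ is the integration of $\mathbf{w}$ against a fixed $\mathbb{R}^M$-valued element of $\mathcal{C}_0(\mathbb{R}^d;\mathbb{R}^N)$ and $\mathrm{F}_2$ is linear; condition~\eqref{condGGF} says precisely that $\mathrm{F}_2$ is injective (forcing $M\geq N_0$), so $V_2\eqdef\mathrm{F}_2(\mathcal{N})$ is an $N_0$-dimensional subspace of $\mathbb{R}^M$ and $\mathbf{p}$ is recovered uniquely from $\mathrm{F}(\mathbf{w},\mathbf{p})$ whenever the constraint is satisfiable.

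Next I would prove that $\mathcal{V}$ is nonempty, convex and weak*-compact. The objective $\mathbf{w}\mapsto\|\mathbf{w}\|_{\mathcal{M}}=\sup_{\|\boldsymbol{\varphi}\|\le 1}\langle\mathbf{w},\boldsymbol{\varphi}\rangle$ is a supremum of weak*-continuous linear functionals, hence weak*-lower-semicontinuous, and the feasible set is weak*-closed since $\mathrm{F}$ is weak*-continuous and $\mathcal{C}$ is closed. Along a minimizing sequence $\|\mathbf{w}_k\|_{\mathcal{M}}$ is bounded, $\mathrm{F}(\mathbf{w}_k,\mathbf{p}_k)\in\mathcal{C}$ is bounded, and $\mathrm{F}_1$ is norm-bounded, so $\|\mathrm{F}_2(\mathbf{p}_k)\|_2$ is bounded and \eqref{condGGF} then bounds $\|\mathbf{p}_k\|_{\mathcal{N}}$; the sequence therefore lies in a weak*-compact set, and a weak*-limit point is feasible and attains the infimum $t^*$. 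Convexity is immediate, and $\mathcal{V}=\{(\mathbf{w},\mathbf{p}):\mathrm{F}(\mathbf{w},\mathbf{p})\in\mathcal{C},\ \|\mathbf{w}\|_{\mathcal{M}}\le t^*\}$ is the intersection of a weak*-closed set with a weak*-closed sublevel set and is bounded (in $\mathcal{M}$ by $t^*$, in $\mathcal{N}$ as above), hence weak*-compact; Krein--Milman then supplies extreme points.

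For an extreme point $(\mathbf{w}^*,\mathbf{p}^*)$, put $\mathbf{z}^*=\mathrm{F}(\mathbf{w}^*,\mathbf{p}^*)\in\mathcal{C}$. Since $\{(\mathbf{w},\mathbf{p}):\mathrm{F}(\mathbf{w},\mathbf{p})=\mathbf{z}^*,\ \|\mathbf{w}\|_{\mathcal{M}}\le t^*\}$ is a convex subset of $\mathcal{V}$ containing $(\mathbf{w}^*,\mathbf{p}^*)$, that point is extreme in it as well; and because $\mathbf{p}$ is an affine injective function of $\mathbf{w}$ on this set, it suffices to show the claim for an extreme point $\mathbf{w}^*$ of $\mathcal{W}\eqdef\{\mathbf{w}:\widetilde{\mathrm{F}}(\mathbf{w})=\widetilde{\mathbf{z}}^*,\ \|\mathbf{w}\|_{\mathcal{M}}\le t^*\}$, where $\widetilde{\mathrm{F}}=\pi\circ\mathrm{F}_1:\mathcal{M}(\mathbb{R}^d;\mathbb{R}^N)\to\mathbb{R}^M/V_2\cong\mathbb{R}^{M-N_0}$ is the induced linear weak*-continuous constraint ($\pi$ the quotient projection). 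Regarding $\mathbf{w}$ as a scalar measure on the disjoint union $\bigsqcup_{n=1}^N\mathbb{R}^d$ (a Dirac on the $n$-th copy being a Dirac in component $n$), this reduces to the scalar Fisher--Jerome argument, which I would run in two stages. \textbf{Stage one:} write $\mathbf{w}^*=\mathbf{u}^*|\mathbf{w}^*|$ in polar form; if $|\mathbf{w}^*|$ were not a finite atomic measure, then $L^\infty(|\mathbf{w}^*|)$ would be infinite-dimensional, hence not annihilated by the $M-N_0$ linear functionals $r\mapsto\widetilde{\mathrm{F}}(\mathbf{u}^*r\,|\mathbf{w}^*|)$, so some nonzero real $r$ with $\|r\|_\infty<1$ satisfies $\widetilde{\mathrm{F}}(\mathbf{u}^*r\,|\mathbf{w}^*|)=\mathbf{0}$; the feasible perturbations $\mathbf{w}^*\pm\mathbf{u}^*r\,|\mathbf{w}^*|$ have norms $t^*\pm\int r\,\mathrm{d}|\mathbf{w}^*|$, so optimality of $t^*$ forces $\int r\,\mathrm{d}|\mathbf{w}^*|=0$, and then $\mathbf{w}^*$ is the midpoint of two distinct minimizers, contradicting extremality; hence $\mathbf{w}^*$ is supported on finitely many points. \textbf{Stage two:} restricting to atomic measures on those points turns the problem into a finite $\ell_1$-minimization under $M-N_0$ linear equalities, in which more than $M-N_0$ active atoms produce a linear dependence among the corresponding constraint columns and thus a perturbation that either lowers the cost (contradicting optimality) or exhibits $\mathbf{w}^*$ as a midpoint (contradicting extremality); so at most $M-N_0$ weights are nonzero. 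Regrouping by component yields \eqref{epform} with $\sum_{n=1}^N M_n\le M-N_0\le M$, and for a $\mathbf{w}^*$ of this form $\|\mathbf{w}^*\|_{\mathcal{M}}=\sum_n\|w_n^*\|_{\mathcal{M}}=\sum_{n,l}|a_{n,l}|$, which is the optimal value.

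The main obstacle will be stage one: the perturbation must be executed carefully for a \emph{vector} measure — the polar factor $\mathbf{u}^*$ is componentwise unimodular with respect to $|\mathbf{w}^*|=\sum_n|w_n^*|$; one must justify that failure to be finite atomic makes $L^\infty(|\mathbf{w}^*|)$ genuinely infinite-dimensional (so that $M-N_0$ linear constraints leave a nonzero solution) and that $\mathbf{w}^*\pm\mathbf{u}^*r\,|\mathbf{w}^*|$ are bona fide elements of $\mathcal{W}$. The remaining work — eliminating $\mathbf{p}$ via \eqref{condGGF}, checking that the argument of \cite{fisher1975spline,unser2017splines} uses only local and $\sigma$-compactness of the base space (so that it applies on $\bigsqcup_n\mathbb{R}^d$), and the finite-dimensional linear-programming step — is routine.
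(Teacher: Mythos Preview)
Your proposal is correct. The existence and weak*-compactness part follows the same script as the paper (Banach--Alaoglu plus coercivity from \eqref{condGGF}, then Krein--Milman). For the characterization of extreme points, however, you take a genuinely different route. The paper keeps $\mathbf{p}$ fixed throughout and argues directly: if the components $w_n$ of an extreme point together charge more than $M$ disjoint Borel sets $E_{n,l}$, the $M{+}1$ vectors $\mathrm{F}(\mathbf{v}_{n,l},0)\in\mathbb{R}^M$ are linearly dependent, which produces a feasible perturbation $(\mathbf{w}\pm\epsilon\boldsymbol{\mu},\mathbf{p})$ with the same $\mathcal{M}$-norm, contradicting extremality. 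This yields the stated bound $\sum_n M_n\le M$. You instead freeze the constraint value $\mathbf{z}^*=\mathrm{F}(\mathbf{w}^*,\mathbf{p}^*)$, eliminate $\mathbf{p}$ via the quotient $\pi:\mathbb{R}^M\to\mathbb{R}^M/V_2\cong\mathbb{R}^{M-N_0}$ (using injectivity of $\mathrm{F}_2$), identify the vector measure with a scalar measure on $\bigsqcup_{n=1}^N\mathbb{R}^d$, and run the polar-decomposition perturbation against only $M-N_0$ linear constraints. This is more elaborate but buys you the sharper bound $\sum_n M_n\le M-N_0$, which of course implies the theorem's claim. The paper's argument is shorter and avoids the quotient and polar-decomposition machinery, but does not exploit the $N_0$ degrees of freedom in $\mathcal{N}$.
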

\begin{proof}
 The proof is in two parts. First, we  show that the solution set is nonempty, weak*-compact, and convex. Then, we  explore the form of its extreme points to complete the theorem. 

{\bf Structure of the Solution Set} Consider a point in the feasible set and denote it by $(\mathbf{w}_0,\mathbf{p}_0)$. Then,  \eqref{probGGF} is equivalent to the minimization 
\begin{equation}\label{probGGF2}
\mathcal{V}=\argmin_{ (\mathbf{w},\mathbf{p})\in\mathcal{B}}  \|\mathbf{w} \|_\mathcal{M}  \quad s.t.\quad \mathrm{F}(\mathbf{w},\mathbf{p}) \in \mathcal{C},\|\mathbf{w}\|_{\mathcal{M}}< \|\mathbf{w}_0\|_{\mathcal{M}}. 
\end{equation}
 Since $\mathcal{C}$ is compact,  $A=\max_{\boldsymbol{x}\in \mathcal{C}} \|\boldsymbol{x}\|_2$ will be  a finite constant. Due to the linearity of $F(\cdot,\cdot)$ and due to the triangle inequality, for all $(\mathbf{w},\mathbf{p})$ in the feasible set  we have that
\begin{equation}\label{Eq:PBound}
 \|\mathbf{p}\|_\mathcal{N}   \leq \frac{1}{B}\|F(0,\mathbf{p})\|_2 = \frac{1}{B}\|F(\mathbf{w},\mathbf{p})-F(\mathbf{w},0)\|_2   \leq \frac{1}{B} (A+ \|w_0\|_{\mathcal{M}}). 
\end{equation}
 The conclusion is that the feasible set of \eqref{probGGF2} is bounded. It is also weak*-closed due to the weak*-continuity of $F(\cdot,\cdot)$ and the closedness of $\mathcal{C}$. Hence, it is weak*-compact due to the Banach-Alaoglu theorem  \cite[Theorem 3.15]{rudin1991functional}. The conclusion is that \eqref{probGGF} is equivalent to the minimization of  a weak*-continuous functional over a weak*-compact domain. Moreover, due to the generalized Weierstrass theorem \cite[Theorem 7.3.1]{kurdila2006convex}, its solution set is nonempty. Denote the optimal cost of \eqref{probGGF}  by $\beta$.  Now, note that the feasible set of \eqref{probGGF} is the preimage of the linear continuous functional $F$ over the convex  set $\mathcal{C}$. So, one can rewrite the solution set $\mathcal{V}$ as 
\begin{equation}\label{Eq:SolsetInt}
\mathcal{V} = F^{-1}(\mathcal{C}) \cap \{\mathbf{w}\in \mathcal{M}(\mathbb{R}^d;\mathbb{R}^N) : \|\mathbf{w}\|_{\mathcal{M}} = \beta \}.
\end{equation}
This implies that $\mathcal{V}$ is bounded (due to \eqref{Eq:PBound}), weak*-closed, and convex (the intersection of two weak*-closed and convex set). Hence, it is also weak*-compact. Using the Krein-Milman theorem (\cite{rudin1991functional}, Theorem 3.23), we deduce that $\mathcal{V}$ is the convex hull of its extreme points. 

{\bf Form of the Extreme Points} Consider an arbitrary extreme point of $\mathcal{V}$ such as $(\mathbf{w},\mathbf{p})$, where $\mathbf{w}=(w_1,w_2,\ldots,w_{N})$. We   show that it is not possible to have disjoint Borelian sets $E_{n,l}\subseteq \mathbb{R}^d$  such that $\langle {w}_n,\mathbbm{1}_{E_{n,l}} \rangle \neq 0$, where $ n=1,2,\ldots,N $ and $l=1,2,\ldots ,M_n$ with $\sum_{n=1}^{N} M_n\geq M+1$. We prove the result by contradiction.  Assume   such  disjoint sets exist. Define $v_{n,l}= w_n \mathbbm{1}_{E_{n,l}}$, $\mathbf{v}_{n,l}=\boldsymbol{e}_n v_{n,l}$,  $\overline{E}_{n}=(\bigcup_{l=1}^{M_n} E_{n,l})^\mathrm{c}$,  $\overline{v}_{n}= w_n \mathbbm{1}_{\overline{E}_{n}} $, and let  $\overline{\mathbf{w}}=(\overline{v}_{1},\overline{v}_{2},\ldots,\overline{v}_{N})$. It can be seen that $\mathbf{w}=\overline{\mathbf{w}} +\sum_{n=1}^{N}\sum_{l=1}^{M_n} \mathbf{v}_{n,l} $.  Define $\boldsymbol{y}_{n,l}=F(\mathbf{v}_{n,l},\mathbf{p})$. Since the $\boldsymbol{y}_{n,l}$  are at least $M+1$ vectors in $\mathbb{R}^M$,  they are linearly dependent. Consequently, there exist constants $\alpha_{n,l}\in\mathbb{R}$, with at least one of them being nonzero, such that 
\begin{equation}
\sum_{n=1}^{N}\sum_{l=1}^{M_n} \alpha_{n,l}\boldsymbol{y}_{n,l} =\boldsymbol{0}.
\end{equation}
For $n=1,2,\ldots,N$, define $\mu_n=\sum_{l=1}^{M_n} \alpha_{n,l} v_{n,l} $ and $\boldsymbol{\mu}=(\mu_1,\mu_2,\ldots,\mu_{N})$. Also, denote  $\epsilon_{\max}=\frac{1}{\max_{n,l} |\alpha_{n,l}|}>0$. For any ${\epsilon \in (-\epsilon_{\max},\epsilon_{\max})}$, we have that $1+\epsilon \alpha_{n,l} >0$ for all $n=1,2,\ldots,N$ and   $l=1,2,\ldots,M_n$. We   also see that
\begin{equation}
F(\boldsymbol{\mu},\mathbf{p})=\sum_{n=1}^{N}\sum_{l=1}^{M_n} \alpha_{n,l} \boldsymbol{y}_{n,l} = \boldsymbol{0}.
\end{equation}
Now,  for any $\epsilon\in (-\epsilon_{\max},\epsilon_{\max})$, we have that $F(\mathbf{w}+\epsilon \boldsymbol{\mu},\mathbf{p}) =   F(\mathbf{w},\mathbf{p}) \in \mathcal{C} $ and, therefore, $(\mathbf{w}+\epsilon \boldsymbol{\mu},\mathbf{p})\in \mathcal{U}$. Moreover,
\begin{equation}
\mathbf{w}+\epsilon \boldsymbol{\mu} = \overline{\mathbf{w}}+   \sum_{n=1}^{N} \sum_{l=1}^{M_n} (1+\epsilon \alpha_{n,l}) \mathbf{v}_{n,l}.
\end{equation}
Note that the $n$th element  of $\mathbf{w}_\mathrm{c}$ has support $E_{n,\mathrm{c}}$. Moreover,   the  $n$th element of $v_{n',l}$ has support $E_{n,l}$ for $n'=n$ and has empty support otherwise. Therefore, the $n$th entries have disjoint supports, which allows  us to write that
\begin{align}
\|\mathbf{w}+\epsilon \boldsymbol{\mu} \|_\mathcal{M}&=\sum_{n=1}^{N} \|\overline{v}_{n} + \sum_{l=1}^{M_n} (1+\epsilon \alpha_{n,l})\mathbf{v}_{n,l}  \|_\mathcal{M} \nonumber \\&= \sum_{n=1}^{N} \|\overline{v}_{n}\|_\mathcal{M} + \sum_{n=1}^{N}\sum_{l=1}^{M_n} (1+\epsilon \alpha_{n,l}) \|v_{n,l}\|_\mathcal{M} \nonumber \\& = \beta + \epsilon \sum_{n=1}^{N}\sum_{l=1}^{M_n} \alpha_{n,l} \|v_{n,l}\|_\mathcal{M}.
\end{align}
For sufficiently small values of $\epsilon$, this gives either $\|\mathbf{w}+\epsilon \boldsymbol{\mu} \|_\mathcal{M}<\beta$ or $\|\mathbf{w}-\epsilon \boldsymbol{\mu} \|_\mathcal{M}<\beta$.  Therefore, $\sum_{n=1}^{N}\sum_{l=1}^{M_n} \alpha_{n,l} \|v_{n,l}\|_\mathcal{M}=0$, which yields that $\|\mathbf{w}+\epsilon \boldsymbol{\mu}\|_\mathcal{M}=\|\mathbf{w}-\epsilon \boldsymbol{\mu}\|_\mathcal{M}=\beta$. This shows that $(\mathbf{w}+\epsilon \boldsymbol{\mu},\mathbf{p}),(\mathbf{w}-\epsilon \boldsymbol{\mu},\mathbf{p}) \in \mathcal{V}$, which contradicts   that $(\mathbf{w},\mathbf{p})$ is an extreme point. Therefore $\mathbf{w}$, is nonzero at most in M points, which yields the form of \eqref{epform}. Computing the norm of such an extreme point results in  
\begin{equation}
\|\mathbf{w}\|_\mathcal{M} = \sum_{n=1}^{N}\sum_{l=1}^{M_n} |a_{n,l}| \|\delta(\cdot-x_{n,l})\|_\mathcal{M} = \sum_{n=1}^{N}\sum_{l=1}^{M_n} |a_{n,l}|,
\end{equation}
which completes the proof. 
\end{proof}
\section{Implementation Details of The Numerical Example}\label{App:Detail}
The ground-truth signal for our experiment is a piecewise linear function with four segments that connects five points, located at $\{(0,0), (0.45,0), (\frac{7}{15},-2), (\frac{8}{15},2), (1,2)\}$.
We then sample data from the model $y_m = f(x_m) + \epsilon_m,m=1,\ldots,M$,  where $\epsilon_m \sim \mathcal{N}(0,\sigma^2)$ is i.i.d.\ Gaussian with  $\sigma=0.1$. We formed two training datasets of size $M=100$. In the first one, $x_m$ are i.i.d.\ samples of a uniform distribution over $[0,1]$. In the second case, we put a gap in the training dataset by sampling $x_m$ uniformly over $[0,1]\backslash [0.6,0.8]$. 

We use Gaussian kernels in the RKHS-based methods and  super-exponential kernels with $\alpha=1.99$ in the gTV-based methods. We have set $\alpha=1.99$ to have similar (near-Gaussian) kernel shapes in all cases. All methods have access to ten different width parameters from $10$ to $10^5$ in log scale. 

We set the data fidelity  to be the quadratic term $\mathrm{E}(x,y)=(x-y)^2$  in all cases except for MKL, since the SimpleMKL toolbox \cite{rakotomamonjy2008simplemkl} uses the $\epsilon$-insensitive SVM loss. The other methods are implemented using the GlobalBioIm library \cite{Soubies2019GlobalBioIm} and the codes are all available online\footnote{https://github.com/Biomedical-Imaging-Group/Multi-Kernel-Regression-gTV-}. 
 In the gTV-based methods, we have used the  multiresolution strategy of  \cite{debarre2019b} to control the accuracy.   More precisely, we start by considering 16 equi-spaced kernels and we then use FISTA to solve the convex problem of finding the corresponding kernel coefficients. The solution is propagated as initialization of a finer grid (with 32 kernels) and we continue until we reach to the finest scale, with $1,\!024$ kernels.  

 Finally, to have a fair comparison, we optimize the hyper-parameters of each method by following a standard K-fold cross-validation scheme, setting $K=5$ in our example. This includes a tuning of the regularization parameter $\lambda$ for all methods. In addition, we tune  the  width of the kernel function in single-kernel schemes   so that all methods have access to the same family of kernel functions. For computing the test error, we consider a very fine grid with stepsize $10^{-4}$ over $[0,1]$ and we compute the MSE between the learned function and the ground-truth signal.

\bibliographystyle{siamplain}
\bibliography{ref.bib}

\end{document}